\def\path{{\phi}}
\newcommand*\diff{\mathop{}\!\mathrm{d}}
\def\adl@drawiv#1#2#3{%
        \hskip.5\tabcolsep
        \xleaders#3{#2.5\@tempdimb #1{1}#2.5\@tempdimb}%
                #2\z@ plus1fil minus1fil\relax
        \hskip.5\tabcolsep}
\newcommand{\cdashlinelr}[1]{%
  \noalign{\vskip\aboverulesep
           \global\let\@dashdrawstore\adl@draw
           \global\let\adl@draw\adl@drawiv}
  \cdashline{#1}
  \noalign{\global\let\adl@draw\@dashdrawstore
           \vskip\belowrulesep}}
\begin{document}
\title{Adversarial Sample Detection Through Neural Network Transport Dynamics}
\titlerunning{Adversarial Sample Detection Through Neural Network Transport Dynamics}
%
\toctitle{Adversarial Sample Detection Through Neural Network Transport Dynamics}
\author{Skander Karkar\inst{1,2} [\Letter] \and Patrick Gallinari\inst{1,2} \and Alain Rakotomamonjy\inst{1} }
\authorrunning{S. Karkar et al.}
\institute{
Criteo AI Lab, Criteo, Paris, France \\ \and 
MLIA, ISIR, Sorbonne Université, Paris, France \email{$\{$as.karkar,p.gallinari,a.rakotomamonjy$\}$@criteo.com}
}

\maketitle              
\tocauthor{Skander~Karkar \and Patrick~Gallinari \and Alain Rakotomamonjy}
\begin{abstract}
We propose a detector of adversarial samples that is based on the view of neural networks as discrete dynamic systems. The detector tells clean inputs from abnormal ones by comparing the discrete vector fields they follow through the layers. We also show that regularizing this vector field during training makes the network more regular on the data distribution's support, thus making the activations of clean inputs more distinguishable from those of abnormal ones. Experimentally, we compare our detector favorably to other detectors on seen and unseen attacks, and show that the regularization of the network's dynamics improves the performance of adversarial detectors that use the internal embeddings as inputs, while also improving test accuracy.

\keywords{Deep learning \and Adversarial detection \and Optimal transport}
\end{abstract}
\section{Introduction}

Neural networks have improved performances on many tasks, including image classification. They are however vulnerable to adversarial attacks which modify an image in a way that is imperceptible to a human but that fools the network into wrongly classifying the image \cite{intriguing}. These adversarial images transfer between networks \cite{universal}, can be carried out physically (e.g. causing autonomous cars to misclassify road signs \cite{physical}), and can be generated without access to the network \cite{transferable}. Developing networks that are robust to adversarial samples or accompanied by detectors that can detect them is indispensable to deploying them safely \cite{safety}. 

We focus on detecting adversarial samples. Networks trained with a softmax classifier produce overconfident predictions even for out-of-distribution inputs \cite{high-confidence}. This makes it difficult to detect such inputs via the softmax outputs. A detector is a system capable of predicting if an input at test time has been adversarially modified. Detectors are trained on a dataset made up of clean and adversarial inputs, after the network training. While simply training the detector on the inputs has been tried, using their intermediate embeddings works better \cite{cw1}. Detectors vary by which activations to use and how to process them to extract the features that the classifier uses to tell clean samples from adversarial ones. 

We make two contributions. First, we propose an adversarial detector that is based on the view of neural networks as dynamical systems that move inputs in space, time represented by depth, to separate them before applying a linear classifier \cite{weinan}. Our detector follows the trajectory of samples in space, through time, to differentiate clean and adversarial images. The statistics that we extract are the positions of the internal embeddings in space approximated by their norms and cosines to a fixed vector. Given their resemblance to the Euler scheme for differential equations, residual networks \cite{resnet1,resnet2,weinan} are particularly amenable to this analysis. Skip connections and residuals are basic building blocks in many architectures such as EfficientNet \cite{efficient-net} and MobileNetV2 \cite{mobile-net}, and ResNets and their variants such as WideResNet \cite{wide} and ResNeXt \cite{resnext} remain competitive \cite{touvron}. Visions Transformers \cite{swin,vit} are also mainly made up of residual stages. Besides, \cite{skip} show an increased vulnerability of residual-type architectures to transferable attacks, precisely because of the skip connections. This motivates the need for a detector that is well adapted to residual-type architectures. But the analysis and implementation can extend immediately to any network where most layers have the same input and output dimensions. 


Our second contribution is to use the transport regularization during training proposed in \cite{lap} to make the activations of adversarial samples more distinguishable from those of clean samples, thus making adversarial detectors perform better, while also improving generalization. We prove that the regularization achieves this by making the network more regular on the support of the data distribution. This does not necessarily make it more robust, but it will make the activations of the clean samples closer to each other and further from those of out-of-distribution samples, thus making adversarial detection easier. This is illustrated on a 2-dimension example in Figure \ref{fig:circles}. 


\section{Related Work}\label{sec:related}

Given a classifier $f$ in a classification task and $\epsilon {>} 0$, an adversarial sample $y$ constructed from a clean sample $x$ is $y = x + \delta$, such that $f(y) \neq f(x)$ and $\| \delta \|_p \leq \epsilon$ for a certain $L_p$ norm. The maximal perturbation size $\epsilon$ has to be so small as to be almost imperceptible to a human. Adversarial attacks are algorithms that find such adversarial samples, and they have been particularly successful against neural networks \cite{intriguing,cw2}. We present the adversarial attacks we use in our experiments in Appendix \ref{appsubsec:exp-attacks}. The main defense mechanisms are robustness, i.e. training a network that is not easily fooled by adversarial samples, and having a detector of these samples.

An early idea for detection was to use a second network \cite{neural-detector}. However, this network can also be adversarially attacked. More recent statistical approaches include LID \cite{lid}, which trains the detector on the local intrinsic dimensionality of activations approximated over a batch, and the Mahalanobis detector \cite{mahalanobis}, which trains the detector on the Mahalanobis distances between the activations and a Gaussian fitted to them during training, assuming they are normally distributed. Our detector is not a statistical approach and does not need batch-level statistics, nor statistics from the training data. Detectors trained in the Fourier domain of activations have also been proposed in \cite{spectral}. See \cite{review} for a review. 

Our second contribution is to regularize the network in a way that makes it Hölder-continuous, but only on the data distribution's support. Estimations of the Lipschitz constant of a network have been used as estimates of its robustness to adversarial samples in \cite{clever,intriguing,lipschitz1,cross-lipschitz}, and making the network more Lipschitz (e.g. by penalizing an upper bound on its Lipschitz constant) has been used to make it more robust (i.e. less likely to be fooled) in \cite{cross-lipschitz,parseval}. These regularizations often work directly on the weights of the network, therefore making it more regular on all the input space. The difference with our method is that we only endue the network with regularity on the support of the clean data. This won't make it more robust to adversarial samples, but it makes its behavior on them more distinguishable, since they tend to lie outside the data manifold. 

\begin{figure}[H]
\centering
\includegraphics[width=\textwidth]{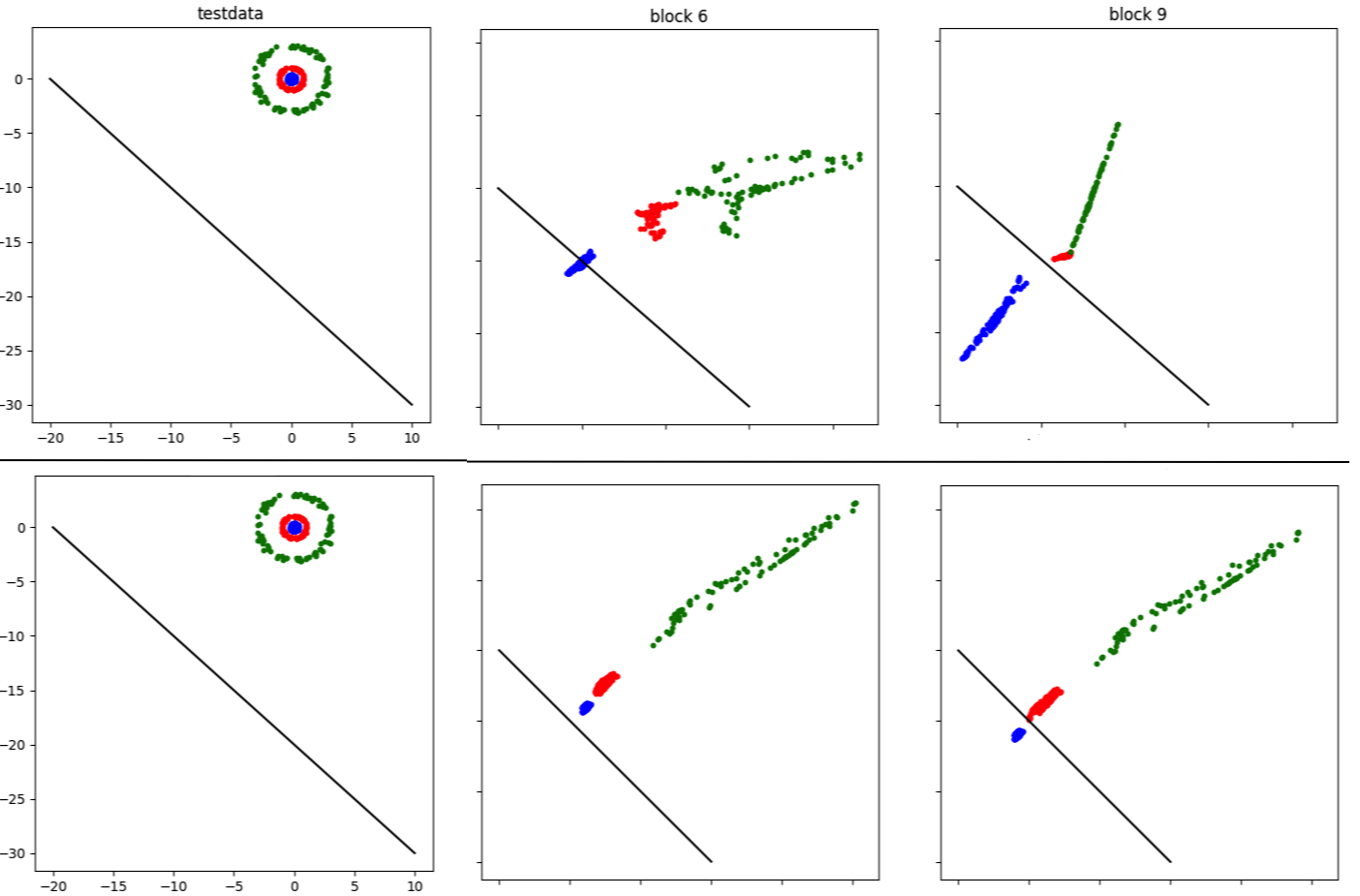}
\caption{Transformed circles test set from scikit-learn (red and blue) and out-of-distribution points (green) after blocks 6 and 9 of a small ResNet with 9 blocks. In the second row, we add our proposed regularization during training, which makes the movements of the clean points (red and blue) more similar to each other and more different from the movements of the green out-of-distribution points than when using the vanilla network in the first row. In particular, without the regularization, the green points are closer to the clean red points after blocks 6 and 9 which is undesirable.}
\label{fig:circles}
\end{figure}

That adversarial samples lie outside the data manifold, particularly in its co-dimensions, is a common observation and explanation for why adversarial samples are easy to find in high dimensions \cite{spheres,boundary-tilting,pixel-defend,lid,defense-gan,geometry,data-transform,kernel-density}. To the best of our knowledge, \cite{rce} is the only other method that attempts to improve detection by encouraging the network during training to learn representations that are more different between clean and adversarial samples. They do this by replacing cross-entropy by a reverse cross-entropy that encourages uniform softmax outputs among the non-predicted classes. We find that our regularization leads to better classification accuracy and adversarial detection than this method.


\section{Background}\label{sec:background}

Our detector is based on the dynamic viewpoint of neural networks that followed from the analogy between ResNets and the Euler scheme made in \cite{weinan}. We present this analogy in Section \ref{subsec:background-lap}. The regularization we use was proposed in \cite{lap} to improve generalization and we also present it in Section \ref{subsec:background-lap}. The regularity results that follow from this regularization require the use of optimal transport theory, which we present in Section \ref{subsec:background-ot}.

\subsection{Optimal Transport}\label{subsec:background-ot}

Let $\alpha$ and $\beta$ be absolutely continuous densities on a compact set $\Omega {\subset} \mathbb{R}^d$. The Monge problem is to look for ${T{:} \mathbb{R}^d {\to} \mathbb{R}^d}$ moving $\alpha$ to $\beta$, i.e. ${T_\sharp \alpha {=} \beta}$, with minimal transport cost:
\begin{equation}
    \label{eq:monge}
    \min_{T \text{ s.t. } T_\sharp \alpha = \beta} \int_{\Omega} \|T(x) - x \|_2^2 \diff \alpha (x)
\end{equation}
and this problem has a unique solution $T^\star$. An equivalent formulation of the Monge problem in this setting is the dynamical formulation. Here, instead of directly pushing points from $\alpha$ to $\beta$ through $T$, we continuously displace mass from time 0 to 1 according to velocity field $v_t: \mathbb{R}^d \to \mathbb{R}^d$. We denote $\phi^x_t$ the position at time $t$ of the particle that was at $x \sim \alpha$ at time 0. This position evolves according to $\partial_t\phi_t^x = v_t(\phi_t^x)$. Rewriting the constraint, Problem (\ref{eq:monge}) is equivalent to the dynamical formulation:
\begin{align}
    \label{eq:brenier2}
     \min_v & \int_0^1 \|v_t\|_{L^2((\phi^\cdot_t)_\sharp \alpha)}^2 \diff t \\
    \text{s.t. } & \partial_t\phi_t^x = v_t(\phi_t^x) \text{ for } x \in \text{support}(\alpha) \text{ and } t \in [0,1[ \nonumber \\ 
    & \phi^\cdot_0 = \text{id}, (\phi^\cdot_1)_\sharp \alpha = \beta \nonumber
\end{align}

\subsection{Least Action Principle Residual Networks}\label{subsec:background-lap}

A residual stage made up of $M$ residual blocks applies $x_{m+1} = x_m + h r_m(x_m)$ for $0 \leq m < M$, with $x_0$ being the input and $h {=} 1$ in practice. The final point $x_M$ is then classified by a linear layer $F$. The dynamic view considers a residual network as an Euler discretization of a differential equation: 
\begin{equation}
    x_{m+1} = x_m + h r_m(x_m)
    \; \longleftrightarrow \;
    \partial_t x_t = v_t(x_t)
    \label{eq:resnet-euler}    
\end{equation}
where $r_m$ approximates the vector field $v_t$ at time $t=m/M$. The dynamic view allows to consider that ResNets are transporting their inputs in space by following a vector field to separate them, the depth representing time, before classification by a linear layer. \cite{lap} look for a network $F \circ T$ that solves the task while having minimal transport cost:

\begin{equation}
\label{eq:ot-classif-static}
\begin{aligned}
& \underset{T, F}{\text{inf}}
& & \int_\Omega \|T(x) - x \|_2^2 \diff \alpha(x)\\
& \text{s.t.}
& & \mathcal{L}(F,T_\sharp\alpha) = 0
\end{aligned}
\end{equation}
where $T$ is made up of the $M$ residual blocks, $\alpha$ is the data distribution, $F$ is the classification head and $\mathcal{L}(F,T_\sharp\alpha)$ is the (cross-entropy) loss obtained from classifying the transformed data distribution $T_\sharp\alpha$ through $F$. Given Section \ref{subsec:background-ot}, the corresponding dynamical version of (\ref{eq:ot-classif-static}) is

\begin{align}
\label{eq:ot-classif-dyn}
& \underset{v, F}{\text{inf}}
& & \int_0^1 \|v_t\|_{L^2((\phi^\cdot_t)_\sharp\alpha)}^2\, \diff t \\
& \text{s.t.}
& & \partial_t\phi_t^x = v_t(\phi_t^x) \text{ for } x \in \text{support}(\alpha) \text{ and } t \in [0,1[  \nonumber \\ 
& & & \phi^\cdot_0 = \text{id}, \ \ \mathcal{L}(F,(\phi^\cdot_1)_\sharp\alpha) = 0 \nonumber
\end{align}

\cite{lap} show that (\ref{eq:ot-classif-static}) and (\ref{eq:ot-classif-dyn}) are equivalent and have a solution such that $T$ is an optimal transport map. In practice, (\ref{eq:ot-classif-dyn}) is discretized using a sample $\mathcal{D}$ from $\alpha$ and an Euler scheme, which gives a residual architecture with residual blocks $r_m$ (parametrized along with the classifier by $\theta$) that approximate $v$. This gives the following problem
\begin{align}
\label{eq:ot-classif-dis}
    & \underset{\theta}{\text{min}}
    & & \mathcal{C}(\theta) = \sum_{x \in \mathcal{D}}\ \sum_{m=0}^{M-1} \| r_m(\varphi_m^x) \|_2^2 \\
    & \text{s.t} 
    & & \varphi_{m+1}^x = \varphi_m^x + h r_m(\varphi_m^x), \  \varphi^x_0 = x \ \ \forall \  x \in \mathcal{D} \nonumber \\
    & & & \mathcal{L}(\theta) = 0 \nonumber
\end{align}
In practice, we solve Problem (\ref{eq:ot-classif-dis}) using a method of multipliers (see Section \ref{subsubsec:method-reg-implem}). Our contribution is to show, theoretically and experimentally, that this makes adversarial examples easier to detect.

\section{Method}\label{sec:method}

We take the view that a ResNet moves its inputs through a discrete vector field to separate them, points in the same class having similar trajectories. Heuristically, for a successful adversarial sample that is close to clean samples, the vector field it follows has to be different at some step from that of the clean samples, so that it joins the trajectory of the points in another class. In Section \ref{subsec:method-detection}, we present how we detect adversarial samples by considering these trajectories. In Section \ref{subsec:method-reg}, we apply the transport regularization by solving (\ref{eq:ot-classif-dis}) to improve detectability of adversarial samples.

\subsection{Detection}\label{subsec:method-detection}

Given a network that applies $x_{m+1} = x_m + h r_m(x_m)$ to an input $x_0$ for $0 \leq m {<} M$, we consider the embeddings $x_m$ for $0 {<} m \leq M$, or the residues $r_m(x_m)$ for $0 \leq m {<} M$. To describe their positions in space, we take their norms and their cosine similarities with a fixed vector as features to train our adversarial detector on. Using only the norms already gave good detection accuracy. Cosines to other orthogonal vectors can be added to better locate the points at the price of increasing the number of features. We found that using only one vector already gives state-of-the-art detection, so we only use the norms and cosines to a fixed vector of ones. We train the detector (a random forest in practice, see Section \ref{subsec:exp-seen}) on these features. The embeddings $x_m$ and the residues $r_m(x_m)$ can equivalently describe the trajectory of $x_0$ in space through the blocks. In practice, we use the residues $r_m(x_m)$, with their norms squared and averaged. So the feature vector given to the random forest for each input $x_0$ that goes through a network that applies $x_{m+1} = x_m + h \ r_m(x_m)$ is
\begin{equation}
    \biggl( \frac{1}{d_m}\|r_m(x_m)\|_2^2, \ \cos \Bigl( r_m(x_m), \mathbf{1}_m\Bigr) \biggr)_{0 \leq m < M}
\end{equation}
and the label is 0 if $x_0$ is clean and 1 if it is adversarial. Here $\cos$ is the cosine similarity between two vectors and $\mathbf{1}_m$ is a vector of ones of size $d_m$ where $d_m$ is the size of $r_m(x_m)$. For any non-residual architecture $x_{m+1} = g_m(x_m)$, the vector $x_{m+1} {-} x_m$ can be used instead of $r_m(x_m)$ on layers that have the same input and output dimension, allowing to apply the method to any network with many such layers. And we do test the detector on a ResNeXt, which does not fully satisfy the dynamic view, as the activation is applied after the skip-connection, i.e. $x_{m+1} = \text{ReLU}(x_m + h \ r_m(x_m))$.

The number of features is twice that of residual blocks (a norm and a cosine per block). This is of the same order as for other popular detectors such as Mahalanobis \cite{mahalanobis} and LID \cite{lid} that extract one feature per residual stage (a residual stage is a group of blocks that keep the same dimension). Even for common large architectures, twice the number of residual blocks is still a small number of features for training a binary classifier (ResNet152 has 50 blocks). More importantly, the features we extract (norms and cosines) are quick to calculate, whereas those of other methods require involved statistical computations on the activations. We include in Appendix \ref{appsubsec:exp-time} a favorable time comparison of our detector to the Mahalanobis detector. Another advantage is that our detector does not have a hyper-parameter to tune unlike the Mahalanobis and LID detectors.

\subsection{Regularization}\label{subsec:method-reg}

Regularity of neural networks (typically Lipschitz continuity) has been used as a measure of their robustness to adversarial samples \cite{clever,intriguing,lipschitz1,cross-lipschitz,parseval}. Indeed, the smaller the Lipschitz constant $L$ of a function $f$ satisfying $\|f(x) - f(y)\| \leq L \|x - y \|$, the less $f$ changes its output $f(y)$ for a perturbation (adversarial or not) $y$ of $x$. Regularizing a network to make it more Lipschitz and more robust has therefore been tried in \cite{cross-lipschitz} and \cite{parseval}. For this to work, the regularization has to apply to adversarial points, i.e. outside the support of the clean data distribution. Indeed, the Lipschitz continuity obtained though most of these methods and analyses apply on the entire input space $\mathbb{R}^d$ as they penalize the network's weights directly. Likewise, a small step size $h$ as in \cite{small-step1} will have the same effect on all inputs, clean or not.

We propose here an alternative approach where we regularize the network only on the support of the input distribution, making it $\eta$-Hölder on this support (a function $f$ is $\eta$-Hölder on $X$ if $\forall \ a,b \in X$, we have $\| f(a) - f(b) \| \leq C \|a - b \|^\eta$ for some constants $C {>} 0$ and  $0 {<} \eta {\leq} 1$, and we denote this $f \in \mathcal{C}^{0,\eta}(X)$). Since this result does not apply outside the input distribution's support, particularly in the adversarial spaces, then this regularity that only applies to clean samples can serve to make adversarial samples more distinguishable from clean ones, and therefore easier to detect. We show experimentally that the behavior of the network will be more distinguishable between clean and adversarial samples in practice in Section \ref{subsec:exp-prel}. We discuss the implementation of the regularization in Section \ref{subsubsec:method-reg-implem} and prove the regularity it endues the network with in Section \ref{subsubsec:method-reg-thm}.

\subsubsection{Implementation.}\label{subsubsec:method-reg-implem}

We regularize the trajectory of the samples by solving Problem (\ref{eq:ot-classif-dis}). This means finding, among the networks that solve the task (condition $\mathcal{L}(\theta) = 0$ in (\ref{eq:ot-classif-dis})), the network that moves the points the least, that is the one with minimal kinetic energy $\mathcal{C}$. The residual functions $r_m$ we find are then our approximation of the vector field $v$ that solves the continuous version (\ref{eq:ot-classif-dyn}) of Problem (\ref{eq:ot-classif-dis}).

We solve Problem (\ref{eq:ot-classif-dis}) via a method of multipliers: since $\mathcal{L} \geq 0$, Problem (\ref{eq:ot-classif-dis}) is equivalent to the min-max problem $\min_\theta \max_{\lambda>0} \ \mathcal{C}(\theta) + \lambda \ \mathcal{L}(\theta)$, which we solve, given growth factor ${\tau > 0}$, and starting from initial weight given to the loss $\lambda_0$ and initial parameters $\theta_0$, through

\begin{equation}
\left\{
\begin{aligned}
\theta_{i + 1} &= \arg \min_{\theta} \ \mathcal{C}(\theta) + \lambda_i \ \mathcal{L}(\theta) \\
\lambda_{i+1} &= \lambda_i + \tau \ \mathcal{L}(\theta_{i+1}) 
\end{aligned}
\right.
\label{eq:uzawa}
\end{equation}

We use SGD for $s{>}0$ steps (i.e. batches) for the minimization in the first line of (\ref{eq:uzawa}), starting from the previous $\theta_i$. When using a ResNeXt, where a residual block applies $x_{m+1} = \text{ReLU}(x_m + r_m(x_m))$, we regularize the norms of the true residues $x_{m+1} {-} x_m$ instead of $r_m(x_m)$. 

\subsubsection{Theoretical Analysis.}\label{subsubsec:method-reg-thm}

We take $\Omega {\subset} \mathbb{R}^d$ convex and compact and the data distribution $\alpha {\in} \mathcal{P}(\Omega)$ absolutely continuous such that $\delta \Omega$ is $\alpha$-negligible. We suppose that there exists an open bounded convex set $X {\subset} \Omega$ such that $\alpha$ is bounded away from zero and infinity on $X$ and is zero on $X^\complement$. From \cite{lap}, Problems (\ref{eq:ot-classif-static}) and (\ref{eq:ot-classif-dyn}) are equivalent and have solutions $(T,F)$ and $(v,F)$ such that $T$ is an optimal transport map between $\alpha$ and $\beta {:=} T_\sharp \alpha$. We suppose that $\beta$ is absolutely continuous and that there exists an open bounded convex set $Y {\subset} \Omega$ such that $\beta$ is bounded away from zero and infinity on $Y$ and is zero on $Y^\complement$. In the rest of this section, $v$ solves (\ref{eq:ot-classif-dyn}) and we suppose that we find a solution to the discretized problem (\ref{eq:ot-classif-dis}) that is an $\varepsilon {/} 2$-approximation of $v$, i.e. $\| r_m - v_{t_m} \|_{\infty} {\leq} \varepsilon / 2$ for all $0 {\leq} m {<} M$, with $t_m {=} m/M$. 

\begin{definition}
    A function $f$ is $\eta$-Hölder on $X$ if $\forall \ a,b \in X$, we have $\| f(a) - f(b) \| \leq C \|a - b \|^\eta$ for some constants $C {>} 0$ and  $0 {<} \eta {\leq} 1$. We denote this $f \in \mathcal{C}^{0,\eta}(X)$.
\end{definition}

In Theorem \ref{thm1}, we show that the regularization makes the residual blocks of the network $\eta$-Hölder (with an error of $\varepsilon$) on the support of the input distribution as it moves according to the theoretical vector field solution $v$. The results hold for all norms on $\mathbb{R}^d$.

\begin{theorem}
\label{thm1}
For $a,b \in \text{support}(\alpha_{t_m})$, $\alpha_t {:=} (\phi^\cdot_t)_\sharp\alpha$ where $\phi$ solves (\ref{eq:ot-classif-dyn}) along with $v$, we have 
\begin{align*}
\| r_m(a) - r_m(b)\| \leq & \ \varepsilon + K \|a - b \|^{\zeta_1} \text{ if } \| a - b \| \leq 1 \\
\| r_m(a) - r_m(b)\| \leq & \ \varepsilon + K \|a - b \|^{\zeta_2} \text{ if } \| a - b \| > 1
\end{align*}
for constants $K > 0$ and $0 < \zeta_1 \leq \zeta_2 \leq 1$.
\end{theorem}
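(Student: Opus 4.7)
The plan is to first reduce the statement, via the $\varepsilon/2$ approximation assumption, to a Hölder estimate on the continuous velocity field $v_{t_m}$ on $\text{support}(\alpha_{t_m})$, and then derive that estimate from classical optimal transport regularity. Applying the triangle inequality together with $\|r_m - v_{t_m}\|_{\infty} \leq \varepsilon/2$ gives
\begin{equation*}
\|r_m(a) - r_m(b)\| \leq \varepsilon + \|v_{t_m}(a) - v_{t_m}(b)\|,
\end{equation*}
so it is enough to bound the remaining term by $K\|a-b\|^{\zeta_i}$ in the two regimes.

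Next I would exploit the structure of minimizers of the dynamical problem (\ref{eq:ot-classif-dyn}). By Brenier's theorem the optimal map is $T = \nabla \psi$ for some convex potential $\psi$, and the corresponding optimal flow is affine in time: $\phi^x_t = (1-t)x + tT(x) =: S_t(x)$, with velocity $v_t(\phi^x_t) = T(x) - x$. Hence on $\text{support}(\alpha_t) = S_t(X)$ one has $v_t = (T - \mathrm{id}) \circ S_t^{-1}$. Monotonicity of $T$ yields
\begin{equation*}
\langle S_t(x) - S_t(y),\, x - y \rangle \geq (1-t)\,\|x-y\|^2,
\end{equation*}
so by Cauchy-Schwarz $S_t$ is injective with $(1-t)^{-1}$-Lipschitz inverse on its image. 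Since $t_m = m/M \leq (M-1)/M < 1$, this Lipschitz constant is uniform in $m$.

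For the Hölder regularity of $T$ itself, I would invoke Caffarelli's regularity theorem: the hypotheses placed on $\alpha$ and $\beta$ (absolutely continuous, bounded away from $0$ and $\infty$, supported on open bounded convex sets $X$ and $Y$) are precisely those required, and give $T \in \mathcal{C}^{0,\eta}(X)$ for some $\eta \in (0,1]$. Combining, for $a,b \in \text{support}(\alpha_{t_m})$,
\begin{align*}
\|v_{t_m}(a) - v_{t_m}(b)\| & \leq \|T(S_{t_m}^{-1}(a)) - T(S_{t_m}^{-1}(b))\| + \|S_{t_m}^{-1}(a) - S_{t_m}^{-1}(b)\| \\
& \leq C_T (1-t_m)^{-\eta}\, \|a-b\|^{\eta} + (1-t_m)^{-1}\, \|a-b\|.
\end{align*}
For $\|a-b\| \leq 1$ the linear term is dominated by the Hölder term, giving $\zeta_1 = \eta$; for $\|a-b\| > 1$ the Hölder term is dominated by the linear term, giving $\zeta_2 = 1$, so $0 < \zeta_1 \leq \zeta_2 \leq 1$ as required. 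Absorbing the constants into a single $K$ and adding back $\varepsilon$ finishes the proof.

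The main obstacle I anticipate is verifying that Caffarelli's theorem applies cleanly: in particular that $T(X) = Y$ and that the density of $\beta$ has the required two-sided bounds on the convex set $Y$, which is exactly where the explicit assumptions on $\beta$ stated immediately before the theorem are used. A secondary subtlety is that the Hölder estimate lives on the moving support $S_{t_m}(X)$ rather than on $X$, so one needs $S_{t_m}^{-1}$ to be well-defined and Lipschitz from $S_{t_m}(X)$ back to $X$; monotonicity of $T$ supplies this, but the Lipschitz constant blows up as $t \to 1$, consistent with the theorem being stated only for $t_m < 1$.
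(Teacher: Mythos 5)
Your proposal is correct, and it reaches the conclusion by a genuinely different route at the key sub-step. Both you and the paper start from the same reduction ($\|r_m - v_{t_m}\|_\infty \leq \varepsilon/2$ plus the triangle inequality), the same representation $v_t = (T - \mathrm{id})\circ T_t^{-1}$ with $T_t = (1-t)\mathrm{id} + tT$, and the same application of the Caffarelli--Figalli regularity theorem to get $T \in \mathcal{C}^{0,\eta}(X)$. Where you diverge is in controlling $T_t^{-1}$: the paper argues that $(\alpha_s)_{s=0}^{t}$ is a Wasserstein geodesic, so $T_t$ is itself an optimal transport map between $\alpha$ and $\alpha_t$, hence $T_t^{-1}$ is the optimal map in the reverse direction and is $\eta_t$-H\"older by a second invocation of the regularity theorem; this forces the final exponents to be $\zeta_1 = \eta\min_m \eta_{t_m}$ and $\zeta_2 = \max_m \eta_{t_m}$, with the $\eta_{t_m}$ unknown beyond lying in $(0,1]$, and it quietly requires that the support of $\alpha_t$ satisfies the hypotheses of the regularity theorem (in particular suitable convexity and density bounds), which the paper only asserts in a parenthetical. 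You instead use monotonicity of the Brenier map to get $\langle T_t(x)-T_t(y), x-y\rangle \geq (1-t)\|x-y\|^2$ and hence a $(1-t)^{-1}$-Lipschitz inverse, uniformly in $m$ since $t_m \leq (M-1)/M$. This is more elementary (no second regularity theorem, no hypotheses to verify on the intermediate measures $\alpha_t$), and it yields a strictly sharper and fully explicit conclusion: $\zeta_1 = \eta$, $\zeta_2 = 1$, with a constant that is explicit in $t$. The only caveat is that your Cauchy--Schwarz step is specific to the Euclidean inner product, whereas the theorem is stated for all norms on $\mathbb{R}^d$; equivalence of norms in finite dimension repairs this at the cost of absorbing dimension-dependent factors into $K$, so nothing is lost.
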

\begin{proof}
The detailed proof is in Appendix \ref{appsubsec:proof1}. First, we have that $v_t = (T - \texttt{id}) \circ T_t^{-1}$ where $T_t := (1 - t)\texttt{id} + t T$ and $T$ solves (\ref{eq:ot-classif-static}). Being an optimal transport map, $T$ is $\eta$-Hölder. So for all $a,b \in \text{support}(\alpha_t)$ and $t \in [0,1[$, where $\alpha_t = (\phi^\cdot_t)_\sharp\alpha = (T_t)_\sharp\alpha$ with $\phi$ solving (\ref{eq:ot-classif-dyn}) with $v$, we have
\begin{equation}
\label{eq:v-reg-1}
\| v_t(a) - v_t(b)\| \leq \| T_t^{-1}(a) - T_t^{-1}(b) \| + C \|T_t^{-1}(a) - T_t^{-1}(b)\|^\eta
\end{equation}
We then show that $T_t^{-1}$ is an optimal transport map and so is $\eta_t$-Hölder with $0 {<} \eta_t {\leq} 1$. Using the hypothesis on $r$ and the triangle inequality, we get, for all $a,b \in \text{support}(\alpha_{t_m})$
\begin{equation}
\label{eq:r-reg-1}
\| r_m(a) - r_m(b)\| \leq \ \varepsilon + C_{t_m} \| a - b \|^{\eta_{t_m}} + C C_{t_m}^\eta \|a - b \|^{\eta \eta_{t_m}}
\end{equation}
Then set the constants $K$, $\zeta_1$ and $\zeta_2$ as necessary.  
\end{proof}

We use Theorem \ref{thm1} to now bound the distance between the residues at depth $m$ as a function of the distance between the network's inputs. For inputs $a_0$ and $b_0$ to the network, the intermediate embeddings are $a_{m+1} = a_m + h r_m(a_m)$ and $b_{m+1} = b_m {+} h r_m(b_m)$, and the residues used to compute features for adversarial detection are $r_m(a_m)$ and $r_m(b_m)$. So we want to bound $\| r_m(a_m) - r_m(b_m) \|$ as a function of $\|a_0 - b_0 \|$. This is usually done by multiplying the Lipschitz constants of each block up to depth $m$, which leads to an overestimation \cite{lip-lim}, or through more complex estimation algorithms \cite{lipschitz1,lipschitz2,lipschitz3}. Bound (\ref{eq:v-reg-1}) allows through $T_t^{-1}$ to avoid multiplying the Hölder constants of the blocks. If $a_0$ and $b_0$ are on the clean data support $X$, we get Theorem \ref{thm2} below with proof in Appendix \ref{appsubsec:proof2}.

\begin{theorem}
\label{thm2}
For $a_0, b_0 \in X$ and constants $C, L {>} 0$,
\begin{align*}
\| r_m(a_m) - r_m(b_m) \| & \leq  \ \varepsilon + \| a_0 - b_0 \| + C \| a_0 - b_0 \|^\eta + \nonumber \\
& + L (\| a_m - \phi^{a_0}_{t_m} \| + \| b_m - \phi^{b_0}_{t_m} \|) \nonumber
\end{align*}
\end{theorem}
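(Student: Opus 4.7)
The plan is to split $\|r_m(a_m) - r_m(b_m)\|$ by triangle inequality through the two \emph{continuous} trajectory points $\phi^{a_0}_{t_m}$ and $\phi^{b_0}_{t_m}$, which (unlike $a_m, b_m$) sit on $\text{support}(\alpha_{t_m})$ so that the regularity results from Theorem~\ref{thm1} and its derivation apply. Concretely, write
\begin{equation*}
\| r_m(a_m) - r_m(b_m) \| \leq \| r_m(a_m) - r_m(\phi^{a_0}_{t_m}) \| + \| r_m(\phi^{a_0}_{t_m}) - r_m(\phi^{b_0}_{t_m}) \| + \| r_m(\phi^{b_0}_{t_m}) - r_m(b_m) \|.
\end{equation*}

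The middle term is the one that benefits from the regularization. The trick is to avoid invoking Theorem~\ref{thm1} directly (which would give a Hölder exponent in $\|\phi^{a_0}_{t_m} - \phi^{b_0}_{t_m}\|$) and instead go back one step in its proof to inequality~(\ref{eq:v-reg-1}) applied to $v_{t_m}$ at the points $a = \phi^{a_0}_{t_m}$, $b = \phi^{b_0}_{t_m}$. Because the continuous flow satisfies $\phi_t = T_t$ on $\text{support}(\alpha)$ (established in the proof of Theorem~\ref{thm1}), we have $T_{t_m}^{-1}(\phi^{a_0}_{t_m}) = a_0$ and likewise for $b_0$. Hence (\ref{eq:v-reg-1}) collapses to
\begin{equation*}
\| v_{t_m}(\phi^{a_0}_{t_m}) - v_{t_m}(\phi^{b_0}_{t_m}) \| \leq \| a_0 - b_0 \| + C \| a_0 - b_0 \|^\eta,
\end{equation*}
and the approximation hypothesis $\|r_m - v_{t_m}\|_\infty \leq \varepsilon/2$ then upgrades this to the same bound for $r_m$ at the cost of adding $\varepsilon$.

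For the two outer terms I would use that each residual block $r_m$ is Lipschitz with some constant $L$ (a standard assumption, or read off from the network architecture), giving $\|r_m(a_m) - r_m(\phi^{a_0}_{t_m})\| \leq L \|a_m - \phi^{a_0}_{t_m}\|$ and analogously for $b$. Summing the three contributions yields the stated inequality. The main subtlety — and the reason the statement still carries the residual terms $\|a_m - \phi^{a_0}_{t_m}\|$ and $\|b_m - \phi^{b_0}_{t_m}\|$ — is precisely that $a_m, b_m$ are the \emph{discrete} Euler iterates under the learned $r_m$, while the Hölder control from optimal transport only applies on $\text{support}(\alpha_{t_m})$; one cannot assume the Euler trajectory stays on this support, so the detour through $\phi^{a_0}_{t_m}$ and $\phi^{b_0}_{t_m}$ (paid for by a Lipschitz term) is essential. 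A separate analysis of how $\|a_m - \phi^{a_0}_{t_m}\|$ grows with $m$ (classical Euler error analysis) could then be plugged in if one wants a fully explicit bound in terms of $\|a_0 - b_0\|$ alone.
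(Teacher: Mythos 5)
Your proposal is correct and follows essentially the same route as the paper's proof: the same triangle-inequality decomposition through $\phi^{a_0}_{t_m}$ and $\phi^{b_0}_{t_m}$, the same collapse of bound (\ref{eq:v-reg-1}) via $T_{t_m}^{-1}(\phi^{x}_{t_m}) = x$ plus the $\varepsilon/2$-approximation hypothesis for the middle term, and the Lipschitz constant of $r_m$ (which the paper justifies as a composition of matrix multiplications and ReLU-type activations) for the two outer terms.
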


Term $\mu(a_0) {:=} \| a_m {-} \phi^{a_0}_{t_m} \|$ (and $\mu(b_0) {:=} \| b_m {-} \phi^{b_0}_{t_m} \|$) is the distance between the point $a_m$ after $m$ residual blocks and the point $\phi^{a_0}_{t_m}$ we get by following the theoretical solution vector field $v$ up to time $t_m$ starting from $a_0$. If $a_0$ and $b_0$ are not on the data support $X$, an extra term has to be introduced to use bound \eqref{eq:v-reg-1}. Bounding the terms $\mu(a_0)$ and $\mu(b_0)$ is possible under more regularity assumptions on $v$. We assume then that $v$ is $\mathcal{C}^1$ and Lipschitz in $x$, which is not stronger than the regularity we get on $v$ through our regularization, as it does not give a similar result to bound \eqref{eq:v-reg-1}. We have for all inputs $a_0$ and $b_0$, whether they are clean or not, Theorem \ref{thm3} below with proof in Appendix \ref{appsubsec:proof2}.

\begin{theorem}
\label{thm3}
For $a_0, b_0 \in \mathbb{R}^d$ and constants ${R,S>0}$,
\begin{align*}
    \|r_m(a_m) - r_m(b_m) \| & \leq \varepsilon + L S \varepsilon + L S R h + \| a_0 - b_0 \| + C \| a_0 - b_0 \|^\eta + \\ 
    &+ L S (\text{dist}(a_0,X) + \text{dist}(b_0,X))
\end{align*}
\end{theorem}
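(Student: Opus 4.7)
The plan is to reduce the theorem to Theorem \ref{thm2} by (i) handling inputs outside of $X$ via projection onto $\overline{X}$, and (ii) using the additional $\mathcal{C}^1$ and Lipschitz-in-$x$ assumption on $v$ to bound the discretization quantities $\mu(a_0), \mu(b_0)$ that appeared as free terms in Theorem \ref{thm2}. The starting observation is that, since $\|r_m - v_{t_m}\|_\infty \leq \varepsilon/2$ and $v_t$ is $L$-Lipschitz in $x$ on all of $\mathbb{R}^d$, one obtains a near-Lipschitz bound on $r_m$ valid everywhere:
\begin{equation*}
\|r_m(a) - r_m(b)\| \leq \varepsilon + L \|a - b\|, \qquad a, b \in \mathbb{R}^d.
\end{equation*}
This is the tool that lets the argument step off the support $X$ at a controlled cost.

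First, for any $a_0 \in \mathbb{R}^d$ pick $a_0^\ast \in \overline{X}$ with $\|a_0 - a_0^\ast\| = \text{dist}(a_0, X)$, and let $a_m^\ast$ be the discrete trajectory starting from $a_0^\ast$; likewise $b_0^\ast, b_m^\ast$. The near-Lipschitz bound gives the recurrence $\|a_{m+1} - a_{m+1}^\ast\| \leq (1+hL)\|a_m - a_m^\ast\| + h\varepsilon$, and a discrete Grönwall argument using $mh \leq 1$ yields $\|a_m - a_m^\ast\| \leq S\,\text{dist}(a_0, X) + (S/L)\varepsilon$ for an $S$ absorbing $e^L$-type factors. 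Applying the near-Lipschitz bound once more produces $\|r_m(a_m) - r_m(a_m^\ast)\| \leq \varepsilon + LS\,\text{dist}(a_0, X) + S\varepsilon$, contributing the $LS\,\text{dist}(a_0, X)$ term in the statement together with part of the $LS\varepsilon$ term, and symmetrically for $b$.

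Second, apply Theorem \ref{thm2} to $a_0^\ast, b_0^\ast \in X$:
\begin{equation*}
\|r_m(a_m^\ast) - r_m(b_m^\ast)\| \leq \varepsilon + \|a_0^\ast - b_0^\ast\| + C\|a_0^\ast - b_0^\ast\|^\eta + L(\mu(a_0^\ast) + \mu(b_0^\ast)).
\end{equation*}
The triangle inequality replaces $\|a_0^\ast - b_0^\ast\|$ by $\|a_0 - b_0\| + \text{dist}(a_0, X) + \text{dist}(b_0, X)$, and the Hölder term is split using subadditivity of $x \mapsto x^\eta$ for $0<\eta\leq 1$, with the extra $\text{dist}^\eta$ pieces absorbed into the linear $LS\,\text{dist}$ contribution since $\Omega$ is compact (so distances are bounded). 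The terms $\mu(a_0^\ast) = \|a_m^\ast - \phi^{a_0^\ast}_{t_m}\|$ are bounded by a standard Euler-scheme argument: setting $e_m := a_m^\ast - \phi^{a_0^\ast}_{t_m}$ and expanding the exact flow as $\phi^{a_0^\ast}_{t_{m+1}} = \phi^{a_0^\ast}_{t_m} + h\, v_{t_m}(\phi^{a_0^\ast}_{t_m}) + O(h^2)$ (the $O(h^2)$ controlled by $\|\partial_t v\|_\infty$ and $L\|v\|_\infty$ thanks to the $\mathcal{C}^1$ assumption), one gets $\|e_{m+1}\| \leq (1 + hL)\|e_m\| + h\varepsilon/2 + cRh^2$, and Grönwall with $e_0 = 0$ produces $\mu(a_0^\ast) \leq S\varepsilon + SRh$, and similarly for $b_0^\ast$. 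This yields the $LS\varepsilon$ and $LSRh$ terms in the final bound.

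Combining the three pieces via
\begin{equation*}
\|r_m(a_m) - r_m(b_m)\| \leq \|r_m(a_m) - r_m(a_m^\ast)\| + \|r_m(a_m^\ast) - r_m(b_m^\ast)\| + \|r_m(b_m^\ast) - r_m(b_m)\|,
\end{equation*}
and then redefining $S$ and $R$ to absorb the various $L$-dependent, $e^L$-type, and $\|\partial_t v\|_\infty$ constants, delivers the claimed bound. The main obstacle is precisely this last bookkeeping: several copies of $\varepsilon$ and of $\text{dist}(\cdot, X)$ (including Hölder powers of the latter) appear with different prefactors, and all of them must be shown to fit simultaneously into the single form $\varepsilon + LS\varepsilon + LSRh + \|a_0 - b_0\| + C\|a_0 - b_0\|^\eta + LS(\text{dist}(a_0, X) + \text{dist}(b_0, X))$. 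This reduces to one choice of a sufficiently large $S$ (dominating $e^L$ and $(e^L - 1)/L$) and $R$ (dominating $\|v\|_\infty$ and $\|\partial_t v\|_\infty$), but it is the step where the constants really need to be tracked rather than waved away.
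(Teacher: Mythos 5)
Your overall architecture is sound and uses the same ingredients as the paper's proof (projection onto $X$, the $\varepsilon/2$-closeness of $r$ to $v$, the Lipschitz assumption on $v$, and Euler consistency/stability), but organized differently. The paper performs a single five-term triangle-inequality decomposition through the exact flow points $\phi^{\hat a_0}_{t_m}$, $\phi^{\hat b_0}_{t_m}$ with $\hat a_0 = \text{Proj}_X(a_0)$, and then bounds $\|a_m - \phi^{\hat a_0}_{t_m}\|$ by inserting the Euler iterates of the exact field $v$ started at $\hat a_0$ and invoking the zero-stability and consistency of the Euler method (Appendix B), which handles the initial offset $\|a_0-\hat a_0\|$, the $\varepsilon$-perturbation, and the $O(h)$ discretization error in one stroke. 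You instead introduce a second discrete $r$-trajectory started at the projection, control $\|a_m - a_m^\ast\|$ by a hand-rolled discrete Gr\"onwall (effectively re-proving zero-stability via your near-Lipschitz bound $\|r_m(a)-r_m(b)\|\le\varepsilon+L\|a-b\|$), and reduce the middle term to Theorem \ref{thm2}. This is a legitimate variant; it costs a few extra copies of $\varepsilon$ that must be absorbed into $LS\varepsilon$ by enlarging $S$, which is consistent with the paper's own level of constant bookkeeping.

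There is, however, one step that is wrong as written: you claim the terms $C\,\text{dist}(a_0,X)^\eta$ and $C\,\text{dist}(b_0,X)^\eta$, produced by splitting $C\|a_0^\ast-b_0^\ast\|^\eta$ via subadditivity, can be ``absorbed into the linear $LS\,\text{dist}$ contribution since $\Omega$ is compact.'' For $0<\eta<1$ the inequality $t^\eta \le K t$ fails as $t\to 0$ (the ratio $t^{\eta-1}$ blows up), so no constant $K$ works uniformly on a bounded interval, and the theorem's statement contains no $\text{dist}^\eta$ terms to hide them in. The fix is to avoid generating these terms at all: since $X$ is convex, the metric projection onto $\overline X$ is $1$-Lipschitz, hence $\|a_0^\ast - b_0^\ast\| \le \|a_0 - b_0\|$, and the middle term from Theorem \ref{thm2} is directly bounded by $\varepsilon + \|a_0-b_0\| + C\|a_0-b_0\|^\eta + L(\mu(a_0^\ast)+\mu(b_0^\ast))$ with no distance terms. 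This is implicitly what the paper does when it writes $\|a_0-b_0\|+C\|a_0-b_0\|^\eta$ after projecting. With that one-line repair, your argument goes through.
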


Terms $\text{dist}(a_0,X)$ and $\text{dist}(b_0,X)$ show that the regularity guarantee is increased for inputs in $X$. The trajectories of clean points are then closer to each other and more different from those of abnormal samples outside $X$.

\section{Experiments}\label{sec:exp}

We evaluate our method on adversarial samples found by 8 attacks. The threat model is as follows. We use 6 white-box attacks that can access the network and its weights and architecture but not its training data: FGM \cite{fgm}, BIM \cite{bim}, DF \cite{df}, CW \cite{cw2}, AutoAttack (AA) \cite{auto} and the Auto-PGD-CE (APGD) variant of PGD \cite{pgd}, and 2 black-box attacks that only query the network: HSJ \cite{hsj} and BA \cite{ba}. We assume the attacker has no knowledge of the detector and use the untargeted (i.e. not trying to direct the mistake towards a particular class) versions of the attacks. We use a maximal perturbation of $\epsilon {=} 0.03$ for FGM, APGD, BIM and AA. We use the $L_2$ norm for CW and HSJ and $L_\infty$ for the other attacks. We compare our detector (which we call the Transport detector or TR) to the Mahalanobis detector (MH in the tables below) of \cite{mahalanobis} and to the detector of \cite{nss,nss2} that uses natural scene statistics (NS in the tables below), and our regularization to reverse cross entropy training of \cite{rce}, which is also meant to improve detection of adversarial samples. We use ART \cite{art} and its default hyper-parameter values (except those specified) to generate the adversarial samples, except for AA for which we use the authors' original code. The code is available at \texttt{github.com/skander-karkar/adv}. See Appendix \ref{appsubsec:exp-attacks} for more details.

We use 3 networks and datasets: ResNeXt50 on CIFAR100, ResNet110 on CIFAR10 and WideResNet on TinyImageNet. Each network is trained normally with cross entropy, with the transport regularization added to cross entropy (called a LAP-network for Least Action Principle), and with reverse cross entropy instead of cross entropy (called an RCE-network). For LAP training, we use (\ref{eq:uzawa}) with $\tau {=} 1$, $s {=} 1$ and $\lambda_0 {=} 1$ for all networks. These hyper-parameters are chosen to improve validation accuracy during training not adversarial detection. Training details are in Appendix \ref{appsubsec:exp-implem}.

In Section \ref{subsec:exp-prel}, we conduct preliminary experiments to show that LAP training improves generalization and stability, and increases the difference between the transport costs of clean and adversarial samples. In Section \ref{subsec:exp-seen}, we test our detector when it is trained and tested on samples generated by the same attack. In Section \ref{subsec:exp-unseen}, we test our detector when it is trained on samples generated by FGM and tested on samples from the other attacks. We then consider OOD detection and adaptive attacks on the detector.

\subsection{Preliminary Experiments}\label{subsec:exp-prel}

Our results confirm those in \cite{lap} that show that LAP training improves test accuracy. Vanilla ResNeXt50 has an accuracy of $74.38\%$ on CIFAR100, while LAP-ResNeXt50 has an accuracy of $77.2\%$. Vanilla ResNet110 has an accuracy of $92.52\%$ on CIFAR10, while LAP-ResNet110 has an accuracy of $93.52\%$ and the RCE-ResNet110 of $93.1\%$. Vanilla WideResNet has an accuracy of $65.14\%$ on TinyImageNet, while LAP-WideResNet has an accuracy of $65.34\%$. LAP training is also more stable by allowing to train deep networks without batch-normalization in Figure \ref{fig:batchnorm} in Appendix \ref{appsubsec:exp-prel}. 

We see in Figure \ref{fig:hist} that LAP training makes the transport cost $\mathcal{C}$ more different between clean and adversarial points. Using its empirical quantiles on clean points allows then to detect samples from some attacks with high recall and a fixed false positive rate, without seeing adversarial samples.

\begin{figure}[H] 
\centering
\includegraphics[width=\textwidth]{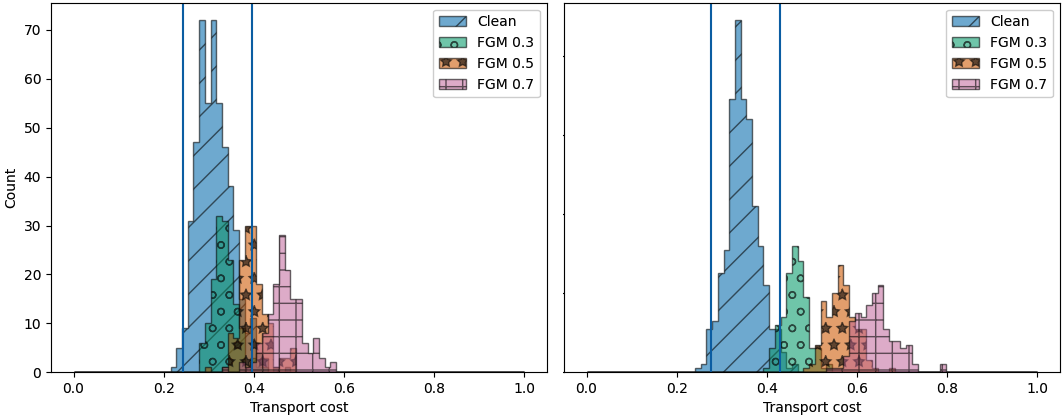}
\caption{Histogram of transport cost $\mathcal{C}$ for clean and FGM-attacked test samples with different values of $\epsilon$ on CIFAR100. The vertical lines represent the 0.02 and 0.98 empirical quantiles of the transport cost of the clean samples. Left: ResNeXt50. Right: LAP-ResNeXt50.}
\label{fig:hist}
\end{figure}

\subsection{Detection of Seen Attacks}\label{subsec:exp-seen}

For detection training, the test set is split in $0.9/0.1$ proportions into two datasets, B1 and B2. For each image in B1 (respectively B2), an adversarial sample is generated and a balanced detection training set (respectively a detection test set) is created. Since adversarial samples are created for a specific network, this is done for the vanilla version of the network and its LAP and RCE versions. We tried augmenting the detection training dataset with a randomly perturbed version of each image, to be considered clean during detection training, as in \cite{mahalanobis}, but we found that this does not improve detection accuracy. This dataset creation protocol is standard and is depicted in Figure \ref{fig:data} in Appendix \ref{appsubsec:exp-data}. We did not limit the datasets to successfully attacked images only as in \cite{mahalanobis}, as we consider the setting of detecting all adversarial samples, whether or not they fool the network, more challenging (which is seen in the results). It also allows to detect any attempted interference with the network, even if it fails at fooling it.

Samples in the detection training set are fed through the network and the features for each detector are extracted. We tried three classifiers (logistic regression, random forest and SVM) trained on these features for all detectors, and kept the random forest as it always performs best. We tried two methods to improve the accuracy of all detectors: class-conditioning and ensembling. In class-conditioning, the features are grouped by the class predicted by the network, and a detector is trained for every class. At test time, the detector trained on the features of the predicted class is used. A detector is also trained on all samples regardless of the predicted class and is used in case a certain class is never targeted by the attack. We also tried ensembling the class-conditional detector with the general all-class detector: an input is considered an attack if at least one detector says so. This ensemble of the class-conditional detector and the general detector performs best for all detectors, and is the one we use.

We report the accuracy of each detector on the detection test set for both the vanilla and the LAP network in Table \ref{tab:seen-attacks}. In each cell, the first number corresponds to the vanilla network and the second to the regularized LAP-network. Since the NS detector takes the image and not its embeddings as input, the impact of LAP and RCE training on its performance is minimal and we report its performance on the vanilla network only. These results are averaged over 5 runs and the standard deviations (which are tight) are in Tables \ref{tab:resnet110-cifar10-seen-1} to \ref{tab:wideresnet-tinyimagenet-seen} in Appendix \ref{appsubsec:exp-seen}, along with results on RCE-networks. Since some attacks are slow, we don't test them on all network-dataset pairs in this experiment. Results in Table \ref{tab:seen-attacks} show two things. First, our detector performs better than both other detectors, with or without the regularization. Second, both the TR and MH detectors work better on the LAP-networks most times. The MH detector benefits more from the regularization, but on all attacks, the best detector is always the Transport detector. In the tables in Appendix \ref{appsubsec:exp-seen}, RCE often improves detection accuracy in this experiment, but clearly less than LAP training. On CIFAR10, our detector outperforms the MH detector by 9 to 16 percentage points on the vanilla ResNet110, and the NS detector by up to 5 points. LAP training improves the accuracy of our detector by an average 1.5 points and that of the MH detector by a substantial 8.3 points on average. On CIFAR100, our detector outperforms the MH detector by 1 to 5 points on the vanilla ResNeXt50, and the NS detector by up to 3 points. LAP training improves the accuracy of both detectors by an average 1 point. On TinyImageNet, our detector greatly outperforms the MH detector by 3 to 15 points on the vanilla WideResNet, and the NS detector slightly. LAP training does not change the accuracy of our detector and improves that of the MH detector by 0.85 points on average. Detection rates of successful adversarial samples (i.e. those that fool the network) are in Table \ref{tab:seen-attacks-tpr} in Appendix \ref{appsubsec:exp-tpr} and are higher than $95\%$ on our detector. False positive rates (positive meaning adversarial) are in Table \ref{tab:seen-attacks-fpr} in Appendix \ref{appsubsec:exp-fpr} and are always less than $5\%$ on our detector. The AUROC is in Table \ref{tab:seen-attacks-auroc} in Appendix \ref{appsubsec:exp-auroc}. On all these metrics, our detector outperforms the other detectors largely, and LAP-training greatly improves the performance of the Mahalanobis detector.

\begin{table}[H]
\footnotesize
\caption{Average accuracy of detectors on adversarial samples from seen attacks on Network/LAP-Network over 5 runs.}
\label{tab:seen-attacks}
\begin{center}
\begin{tabular}{@{}ccccc@{}}
\toprule
Attack & Detector & \begin{tabular}[c]{@{}c@{}}ResNet110\\ CIFAR10\end{tabular} & \begin{tabular}[c]{@{}c@{}}ResNeXt50\\ CIFAR100\end{tabular} & \begin{tabular}[c]{@{}c@{}}WideResNet\\ TinyImageNet\end{tabular} \\ \midrule
\multirow{3}{*}{FGM} & TR & 97.14/\textbf{98.70} & 97.26/\textbf{98.32} & \textbf{95.36}/95.14 \\
 & MH & 87.78/95.64 & 95.82/96.82 & 81.06/85.26 \\
 & NS & 94.56 & 94.70 & 94.90 \\ \midrule
\multirow{3}{*}{APGD} & TR & 94.10/\textbf{97.50} & 96.04/\textbf{97.84} & \textbf{95.22}/95.20 \\
 & MH & 82.08/90.70 & 93.94/94.60 & 79.66/85.10 \\
 & NS & 94.28 & 94.18 & 94.86 \\ \midrule
\multirow{3}{*}{BIM} & TR & 97.54/\textbf{99.28} & 98.02/\textbf{98.92} & \textbf{95.26}/95.12 \\
 & MH & 86.78/95.38 & 96.06/97.76 & 81.20/82.46 \\
 & NS & 95.04 & 94.72 & 95.00 \\ \midrule
\multirow{3}{*}{AA} & TR & 88.88/\textbf{94.08} & 84.90/\textbf{87.56} & \textbf{81.38}/81.24 \\
 & MH & 80.46/89.96 & 83.90/86.58 & 78.40/78.40 \\
 & NS & 88.78 & 84.82 & 81.32 \\ \midrule
\multirow{3}{*}{DF} & TR & \textbf{99.98}/99.84 & \textbf{99.80}/99.58 &  \\
 & MH & 91.50/96.70 & 97.30/97.12 &  \\
 & NS & 99.78 & 99.6 &  \\ \midrule
\multirow{3}{*}{CW} & TR & \textbf{98.04}/97.96 & 97.04/\textbf{97.80} &  \\
 & MH & 85.58/93.36 & 95.38/96.42 &  \\
 & NS & 93.86 & 90.7 &  \\ \midrule
\multirow{3}{*}{HSJ} & TR & \textbf{99.94}/99.92 &  &  \\
 & MH & 85.50/94.56 &  &  \\
 & NS & 99.68 &  &  \\ \midrule
\multirow{3}{*}{BA} & TR & 96.56/\textbf{97.02} &  &  \\
 & MH & 80.20/89.62 &  &  \\
 & NS & 92.10 &  &  \\ \bottomrule
\end{tabular}
\end{center}
\end{table}

\subsection{Detection of Unseen Attacks}\label{subsec:exp-unseen}

An important setting is when we don't know which attack might be used or only have time to train detectors on samples from one attack. We still want our detector to generalize well to unseen attacks. To test this, we use the same vanilla networks as above but the detectors are now trained on the detection training set created by the simplest and quickest attack (FGM) and tested on the detection test sets created by the other attacks. Results are in Table \ref{tab:unseen-attacks}. We see that our detector has very good generalization to unseen attacks, even those very different from FGM, comfortably better than the MH detector, by up to 19 percentage points, while the NS detector only generalizes to variants of FGM (APGD and BIM), and fails on the other attacks. These results are averaged over 5 runs and the standard deviations are in Tables \ref{tab:resnet110-cifar10-unseen-1} to \ref{tab:wideresnet-tinyimagenet-unseen-2} in Appendix \ref{appsubsec:exp-unseen}. On our detector, the detection rate of successful adversarial samples remains higher than $90\%$ in most cases (Table \ref{tab:unseen-attacks-tpr} in Appendix \ref{appsubsec:exp-tpr}) and the FPR is always lower than $10\%$ (Table \ref{tab:unseen-attacks-fpr} in Appendix \ref{appsubsec:exp-fpr}). The AUROC is in Table \ref{tab:unseen-attacks-auroc} in Appendix \ref{appsubsec:exp-auroc}. Our detector almost always outperforms the other detectors on all these metrics.

\begin{table}[H]
\footnotesize
\caption{Average accuracy of detectors on samples from unseen attacks after training on FGM over 5 runs.}
\label{tab:unseen-attacks}
\begin{center}
\begin{tabular}{@{}ccccc@{}}
\toprule
Attack & Detector & \begin{tabular}[c]{@{}c@{}}ResNet110\\ CIFAR10\end{tabular} & \begin{tabular}[c]{@{}c@{}}ResNeXt50\\ CIFAR100\end{tabular} & \begin{tabular}[c]{@{}c@{}}WideResNet\\ TinyImageNet\end{tabular} \\ \midrule
\multirow{3}{*}{APGD} & TR & 89.32 & 91.94 & 93.26 \\
 & MH & 77.34 & 90.86 & 76.96 \\
 & NS & \textbf{92.08} & \textbf{92.16} & \textbf{94.06}  \\ \midrule
\multirow{3}{*}{BIM} & TR & \textbf{96.02} & \textbf{95.02} & \textbf{94.66} \\
 & MH & 77.24 & 93.16 & 77.02 \\
 & NS & 93.88 & 93.88 & 94.62 \\ \midrule
\multirow{3}{*}{AA} & TR & \textbf{85.10} & \textbf{73.32} & \textbf{77.04} \\
 & MH & 72.12 & 73.08 & 60.36 \\
 & NS & 51.82 & 51.32 & 65.60  \\ \midrule
\multirow{3}{*}{DF} & TR & \textbf{91.02} & \textbf{85.16} & \textbf{90.62} \\
 & MH & 80.12 & 82.72 & 73.18 \\
 & NS & 51.40 & 51.62 & 72.82 \\ \midrule
\multirow{3}{*}{CW} & TR & \textbf{93.18} & \textbf{78.18} & \textbf{91.42} \\
 & MH & 79.92 & 76.44 & 75.52 \\
 & NS & 50.84 & 51.02 & 71.96 \\ \midrule
\multirow{3}{*}{HSJ} & TR & \textbf{93.00} & \textbf{85.04} &  \\
 & MH & 79.70 & 82.82 &  \\
 & NS & 52.12 & 52.04 &  \\ \midrule
\multirow{3}{*}{BA} & TR & \textbf{90.92} & \textbf{92.14} &  \\
 & MH & 79.32 & 84.46 &  \\
 & NS & 59.88 & 57.90 &  \\ \bottomrule
\end{tabular}
\end{center}
\end{table}

However, this experiment shows that our regularization has some limitations. We see in Tables \ref{tab:resnet110-cifar10-unseen-1} to \ref{tab:wideresnet-tinyimagenet-unseen-2} in Appendix \ref{appsubsec:exp-unseen} that LAP training does not improve detection accuracy as much, and sometimes reduces it. It still improves it for the MH detector on all attacks on ResNet110 and WideResNet by up to 10 points, and LAP training still always does better than RCE training. We claim this is because these methods reduce the variance of features extracted on the seen attack, harming generalization to unseen attacks. This explains why detection of APGD and BIM, variants of FGM, improves. 

\subsection{Detection of Out-Of-Distribution Samples}\label{subsec:exp-ood}

Since our analysis applies to all out-of-distribution (OOD) samples, we test detection of OOD samples in a similar setting to \cite{mahalanobis}. We train a model on a first dataset (ResNet110 on CIFAR10 and ResNeXt50 on CIFAR100), then train detectors to tell this first dataset from a second dataset (which can be an adversarially attacked version of the first dataset), then test their ability to tell the first dataset from a third unseen dataset (SVHN). Our detector does very well and better than the MH detector on both experiments, and detection accuracy of samples from the unseen distribution is higher than $90\%$ when using the CW attack to create the second dataset. Details are in Appendix \ref{appsubsec:exp-ood}.

\subsection{Attacking the Detector}\label{subsec:exp-adap}

We consider the case where the detector is also attacked (adaptive attacks). We try 2 attacks on the TR and MH detectors. Both are white-box with respect to the network. The first is black-box with respect to the detector and only knows if a sample has been detected or not. The second has some knowledge about the detector. It knows what features it uses and can attack it directly to find adversarial features. We test these attacks by looking at the percentage of detected successful adversarial samples that they turn into undetected successful adversarial samples. For the first attack, this is $6.8\%$ for our detector and $12.9\%$ for the MH detector on the LAP-ResNet110, and is lowered by LAP training. For the second attack it is $14\%$ on our detector. Given that detection rates of successful adversarial samples are almost $100\%$ (see Appendix \ref{appsubsec:exp-tpr}), this shows that an adaptive attack does not circumvent the detector, as detection rates drop to $85\%$ at worst. Details are in Appendix \ref{appsubsec:exp-adap}.

\section{Conclusion}\label{sec:conclusion}

We proposed a method for detecting adversarial samples, based on the dynamical view of neural networks. The method examines the discrete vector field moving the inputs to distinguish clean and abnormal samples. The detector requires minimal computation to extract the features it uses for detection and achieves state-of-the-art detection accuracy on seen and unseen attacks. We also use a transport regularization that both improves test classification accuracy and the accuracy of adversarial detectors.

\section*{Ethical Statement}

Adversarial detection and robustness are essential to safely deploy neural networks
that attackers might target for nefarious purposes. But adversarial attacks can
be used to evade neural networks that are deployed for nefarious purposes.

\bibliographystyle{splncs04}
\bibliography{biblio}

\begin{thebibliography}{10}
\providecommand{\url}[1]{\texttt{#1}}
\providecommand{\urlprefix}{URL }
\providecommand{\doi}[1]{https://doi.org/#1}

\bibitem{review}
Aldahdooh, A., Hamidouche, W., Fezza, S.A., Déforges, O.: Adversarial example
  detection for dnn models: a review and experimental comparison. Artificial
  Intelligence Review  (2022)

\bibitem{data-transform}
Alemany, S., Pissinou, N.: The dilemma between data transformations and
  adversarial robustness for time series application systems. In: Proceedings
  of the Workshop on Artificial Intelligence Safety 2022 (SafeAI 2022)
  co-located with the Thirty-Sixth {AAAI} Conference on Artificial Intelligence
  (AAAI2022), Virtual, February, 2022. {CEUR} Workshop Proceedings, vol.~3087.
  CEUR-WS.org (2022)

\bibitem{safety}
Amodei, D., Olah, C., Steinhardt, J., Christiano, P., Schulman, J., Mané, D.:
  Concrete problems in ai safety (2016). \doi{10.48550/ARXIV.1606.06565},
  \url{https://arxiv.org/abs/1606.06565}

\bibitem{sa}
Andriushchenko, M., Croce, F., Flammarion, N., Hein, M.: Square attack: a
  query-efficient black-box adversarial attack via random search. In: Computer
  Vision – ECCV 2020. Springer (2020)

\bibitem{brenier}
Benamou, J., Brenier, Y.: A computational fluid mechanics solution to the
  monge-kantorovich mass transfer problem. Numerische Mathematik  (2000)

\bibitem{lipschitz3}
Bhowmick, A., D'Souza, M., Raghavan, G.S.: Lipbab: Computing exact lipschitz
  constant of relu networks. CoRR  \textbf{abs/2105.05495} (2021),
  \url{https://arxiv.org/abs/2105.05495}

\bibitem{ba}
Brendel, W., Rauber, J., Bethge, M.: Decision-based adversarial attacks:
  Reliable attacks against black-box machine learning models. In: International
  Conference on Learning Representations (2018),
  \url{https://openreview.net/forum?id=SyZI0GWCZ}

\bibitem{cw2}
Carlini, N., Wagner, D.: Towards evaluating the robustness of neural networks.
  In: 2017 IEEE Symposium on Security and Privacy (SP). pp. 39--57. IEEE
  Computer Society, Los Alamitos, CA, USA (may 2017). \doi{10.1109/SP.2017.49},
  \url{https://doi.ieeecomputersociety.org/10.1109/SP.2017.49}

\bibitem{cw1}
Carlini, N., Wagner, D.: Adversarial Examples Are Not Easily Detected:
  Bypassing Ten Detection Methods, p. 3–14. Association for Computing
  Machinery, New York, NY, USA (2017),
  \url{https://doi.org/10.1145/3128572.3140444}

\bibitem{hsj}
Chen, J., Jordan, M.I., Wainwright, M.J.: Hopskipjumpattack: A query-efficient
  decision-based attack. In: 2020 {IEEE} Symposium on Security and Privacy. pp.
  1277--1294. {IEEE} (2020), \url{https://doi.org/10.1109/SP40000.2020.00045}

\bibitem{parseval}
Cisse, M., Bojanowski, P., Grave, E., Dauphin, Y., Usunier, N.: Parseval
  networks: Improving robustness to adversarial examples. In: Proceedings of
  the 34th International Conference on Machine Learning. p. 854–863. PMLR
  (2017)

\bibitem{fab}
Croce, F., Hein, M.: Minimally distorted adversarial examples with a fast
  adaptive boundary attack. In: Proceedings of the 37th International
  Conference on Machine Learning. PMLR (2020)

\bibitem{auto}
Croce, F., Hein, M.: Reliable evaluation of adversarial robustness with an
  ensemble of diverse parameter-free attacks. In: Proceedings of the 37th
  International Conference on Machine Learning. PMLR (2020)

\bibitem{vit}
Dosovitskiy, A., Beyer, L., Kolesnikov, A., Weissenborn, D., Zhai, X.,
  Unterthiner, T., Dehghani, M., Minderer, M., Heigold, G., Gelly, S.,
  Uszkoreit, J., Houlsby, N.: An image is worth 16x16 words: Transformers for
  image recognition at scale. In: International Conference on Learning
  Representations (2021), \url{https://openreview.net/forum?id=YicbFdNTTy}

\bibitem{physical}
Eykholt, K., Evtimov, I., Fernandes, E., Li, B., Rahmati, A., Xiao, C.,
  Prakash, A., Kohno, T., Song, D.: Robust physical-world attacks on deep
  learning visual classification. In: 2018 IEEE/CVF Conference on Computer
  Vision and Pattern Recognition (CVPR). pp. 1625--1634. IEEE Computer Society,
  Los Alamitos, CA, USA (jun 2018). \doi{10.1109/CVPR.2018.00175},
  \url{https://doi.ieeecomputersociety.org/10.1109/CVPR.2018.00175}

\bibitem{kernel-density}
Feinman, R., Curtin, R.R., Shintre, S., Gardner, A.B.: Detecting adversarial
  samples from artifacts (2017)

\bibitem{figalli}
Figalli, A.: The Monge-Ampere Equation and Its Applications. Zurich lectures in
  advanced mathematics, European Mathematical Society (2017)

\bibitem{spheres}
Gilmer, J., Metz, L., Faghri, F., Schoenholz, S.S., Raghu, M., Wattenberg, M.,
  Goodfellow, I.: Adversarial spheres: The relationship between
  high-dimensional geometry and adversarial examples (2018),
  \url{https://arxiv.org/abs/1801.02774}

\bibitem{fgm}
Goodfellow, I.J., Shlens, J., Szegedy, C.: Explaining and harnessing
  adversarial examples. In: ICLR (2015)

\bibitem{spectral}
Harder, P., Pfreundt, F.J., Keuper, M., Keuper, J.: Spectraldefense: Detecting
  adversarial attacks on cnns in the fourier domain (2021).
  \doi{10.48550/ARXIV.2103.03000}, \url{https://arxiv.org/abs/2103.03000}

\bibitem{resnet1}
He, K., Zhang, X., Ren, S., Sun, J.: Deep residual learning for image
  recognition. In: CVPR (2016)

\bibitem{resnet2}
He, K., Zhang, X., Ren, S., Sun, J.: Identity mappings in deep residual
  networks. In: ECCV (2016)

\bibitem{cross-lipschitz}
Hein, M., Andriushchenko, M.: Formal guarantees on the robustness of a
  classifier against adversarial manipulation. In: Advances in Neural
  Information Processing Systems. p. 2263–2273. Curran Associates Inc., Red
  Hook, NY, USA (2017)

\bibitem{lip-lim}
Huster, T., Chiang, C.Y.J., Chadha, R.: Limitations of the lipschitz constant
  as a defense against adversarial examples. In: ECML PKDD 2018 Workshops. pp.
  16--29. Springer International Publishing (2019)

\bibitem{rf1}
Kantchelian, A., Tygar, J.D., Joseph, A.D.: Evasion and hardening of tree
  ensemble classifiers. In: Proceedings of the 33th International Conference on
  Machine Learning. PMLR (2016)

\bibitem{lap}
Karkar, S., Ayed, I., de~Bézenac, E., Gallinari, P.: A principle of least
  action for the training of neural networks. In: ECML-PKDD (2020)

\bibitem{nss2}
Kherchouche, A., Fezza, S.A., Hamidouche, W.: {Detect and defense against
  adversarial examples in deep learning using natural scene statistics and
  adaptive denoising}. {Neural Computing and Applications}  \textbf{34}(24),
  21567--21582 (Dec 2022). \doi{10.1007/s00521-021-06330-x},
  \url{https://hal.science/hal-03330258}

\bibitem{nss}
Kherchouche, A., Fezza, S.A., Hamidouche, W., Déforges, O.: Detection of
  adversarial examples in deep neural networks with natural scene statistics.
  In: 2020 International Joint Conference on Neural Networks (IJCNN). pp.~1--7
  (2020). \doi{10.1109/IJCNN48605.2020.9206959}

\bibitem{geometry}
Khoury, M., Hadfield-Menell, D.: On the geometry of adversarial examples
  (2018), \url{https://arxiv.org/abs/1811.00525}

\bibitem{cifar}
Krizhevsky, A.: Learning multiple layers of features from tiny images.
  University of Toronto Technical Report  (2009)

\bibitem{bim}
Kurakin, A., Goodfellow, I., Bengio, S.: Adversarial examples in the physical
  world. In: ICLR (Workshop) (2017)

\bibitem{lipschitz2}
Latorre, F., Rolland, P., Cevher, V.: Lipschitz constant estimation of neural
  networks via sparse polynomial optimization. In: International Conference on
  Learning Representations (2020),
  \url{https://openreview.net/forum?id=rJe4_xSFDB}

\bibitem{mahalanobis}
Lee, K., Lee, K., Lee, H., Shin, J.: A simple unified framework for detecting
  out-of-distribution samples and adversarial attacks. In: Bengio, S., Wallach,
  H., Larochelle, H., Grauman, K., Cesa-Bianchi, N., Garnett, R. (eds.)
  Advances in Neural Information Processing Systems. vol.~31. Curran
  Associates, Inc. (2018),
  \url{https://proceedings.neurips.cc/paper/2018/file/abdeb6f575ac5c6676b747bca8d09cc2-Paper.pdf}

\bibitem{transferable}
Liu, Y., Chen, X., Liu, C., Song, D.: Delving into transferable adversarial
  examples and black-box attacks. In: International Conference on Learning
  Representations. OpenReview.net (2017),
  \url{https://openreview.net/forum?id=Sys6GJqxl}

\bibitem{swin}
Liu, Z., Lin, Y., Cao, Y., Hu, H., Wei, Y., Zhang, Z., Lin, S., Guo, B.: Swin
  transformer: Hierarchical vision transformer using shifted windows. In:
  Proceedings of the IEEE/CVF International Conference on Computer Vision
  (ICCV) (2021)

\bibitem{lid}
Ma, X., Li, B., Wang, Y., Erfani, S.M., Wijewickrema, S., Schoenebeck, G.,
  Song, D., Houle, M.E., Bailey, J.: Characterizing adversarial subspaces using
  local intrinsic dimensionality. In: ICLR (2018)

\bibitem{pgd}
Madry, A., Makelov, A., Schmidt, L., Tsipras, D., Vladu, A.: Towards deep
  learning models resistant to adversarial attacks. In: ICLR (2018),
  \url{https://openreview.net/forum?id=rJzIBfZAb}

\bibitem{neural-detector}
Metzen, J.H., Genewein, T., Fischer, V., Bischoff, B.: On detecting adversarial
  perturbations. In: ICLR (2017)

\bibitem{universal}
Moosavi-Dezfooli, S., Fawzi, A., Fawzi, O., Frossard, P.: Universal adversarial
  perturbations. In: 2017 IEEE Conference on Computer Vision and Pattern
  Recognition (CVPR). pp. 86--94. IEEE Computer Society, Los Alamitos, CA, USA
  (2017). \doi{10.1109/CVPR.2017.17},
  \url{https://doi.ieeecomputersociety.org/10.1109/CVPR.2017.17}

\bibitem{df}
Moosavi-Dezfooli, S.M., Fawzi, A., Frossard, P.: Deepfool: A simple and
  accurate method to fool deep neural networks. 2016 IEEE Conference on
  Computer Vision and Pattern Recognition (CVPR) pp. 2574--2582 (2016)

\bibitem{svhn}
Netzer, Y., Wang, T., Coates, A., Bissacco, A., Wu, B., Ng, A.Y.: Reading
  digits in natural images with unsupervised feature learning. In: NIPS
  Workshop on Deep Learning and Unsupervised Feature Learning (2011)

\bibitem{high-confidence}
Nguyen, A., Yosinski, J., Clune, J.: Deep neural networks are easily fooled:
  High confidence predictions for unrecognizable images. In: 2015 IEEE
  Conference on Computer Vision and Pattern Recognition (CVPR). pp. 427--436
  (2015). \doi{10.1109/CVPR.2015.7298640}

\bibitem{art}
Nicolae, M.I., Sinn, M., Tran, M.N., Buesser, B., Rawat, A., Wistuba, M.,
  Zantedeschi, V., Baracaldo, N., Chen, B., Ludwig, H., Molloy, I.M., Edwards,
  B.: Adversarial robustness toolbox v1.0.0 (2018).
  \doi{10.48550/ARXIV.1807.01069}, \url{https://arxiv.org/abs/1807.01069}

\bibitem{rce}
Pang, T., Du, C., Dong, Y., Zhu, J.: Towards robust detection of adversarial
  examples. In: Bengio, S., Wallach, H., Larochelle, H., Grauman, K.,
  Cesa-Bianchi, N., Garnett, R. (eds.) Advances in Neural Information
  Processing Systems. vol.~31. Curran Associates, Inc. (2018)

\bibitem{numerical}
Quarteroni, A., Sacco, R., Saleri, F.: Numerical Mathematics. Springer Berlin,
  Heidelberg (2007)

\bibitem{defense-gan}
Samangouei, P., Kabkab, M., Chellappa, R.: Defense-{GAN}: Protecting
  classifiers against adversarial attacks using generative models. In:
  International Conference on Learning Representations (2018),
  \url{https://openreview.net/forum?id=BkJ3ibb0-}

\bibitem{mobile-net}
Sandler, M., Howard, A.G., Zhu, M., Zhmoginov, A., Chen, L.: Mobilenetv2:
  Inverted residuals and linear bottlenecks. In: 2018 {IEEE} Conference on
  Computer Vision and Pattern Recognition, {CVPR} 2018, Salt Lake City, UT,
  USA, June 18-22, 2018. pp. 4510--4520. Computer Vision Foundation / {IEEE}
  Computer Society (2018)

\bibitem{santambrogio2015}
Santambrogio, F.: Optimal Transport for Applied Mathematicians. Birkhäuser
  (2015)

\bibitem{pixel-defend}
Song, Y., Kim, T., Nowozin, S., Ermon, S., Kushman, N.: Pixeldefend: Leveraging
  generative models to understand and defend against adversarial examples. In:
  International Conference on Learning Representations (2018),
  \url{https://openreview.net/forum?id=rJUYGxbCW}

\bibitem{intriguing}
Szegedy, C., Zaremba, W., Sutskever, I., Bruna, J., Erhan, D., Goodfellow, I.,
  Fergus, R.: Intriguing properties of neural networks (2013).
  \doi{10.48550/ARXIV.1312.6199}, \url{https://arxiv.org/abs/1312.6199}

\bibitem{efficient-net}
Tan, M., Le, Q.: {E}fficient{N}et: Rethinking model scaling for convolutional
  neural networks. In: Chaudhuri, K., Salakhutdinov, R. (eds.) Proceedings of
  the 36th International Conference on Machine Learning. Proceedings of Machine
  Learning Research, vol.~97, pp. 6105--6114. PMLR (09--15 Jun 2019),
  \url{https://proceedings.mlr.press/v97/tan19a.html}

\bibitem{boundary-tilting}
Tanay, T., Griffin, L.: A boundary tilting persepective on the phenomenon of
  adversarial examples (2016)

\bibitem{villani}
Villani, C.: Optimal Transport: Old and New. Springer-Verlag (2008)

\bibitem{lipschitz1}
Virmaux, A., Scaman, K.: Lipschitz regularity of deep neural networks: analysis
  and efficient estimation. In: Bengio, S., Wallach, H., Larochelle, H.,
  Grauman, K., Cesa-Bianchi, N., Garnett, R. (eds.) Advances in Neural
  Information Processing Systems. vol.~31. Curran Associates, Inc. (2018),
  \url{https://proceedings.neurips.cc/paper/2018/file/d54e99a6c03704e95e6965532dec148b-Paper.pdf}

\bibitem{weinan}
Weinan, E.: A proposal on machine learning via dynamical systems. Commun. Math.
  Stat  (2017)

\bibitem{clever}
Weng, T.W., Zhang, H., Chen, P.Y., Yi, J., Su, D., Gao, Y., Hsieh, C.J.,
  Daniel, L.: Evaluating the robustness of neural networks: An extreme value
  theory approach. In: International Conference on Learning Representations
  (2018)

\bibitem{touvron}
Wightman, R., Touvron, H., J{\'{e}}gou, H.: Resnet strikes back: An improved
  training procedure in timm. arXiv  (2021)

\bibitem{skip}
Wu, D., Wang, Y., Xia, S.T., Bailey, J., Ma, X.: Skip connections matter: On
  the transferability of adversarial examples generated with resnets. In:
  International Conference on Learning Representations (2020)

\bibitem{resnext}
Xie, S., et~al.: Aggregated residual transformations for deep neural networks.
  In: CVPR (2017)

\bibitem{wide}
Zagoruyko, S., Komodakis, N.: Wide residual networks. In: BMVC (2016)

\bibitem{rf2}
Zhang, C., Zhang, H., Hsieh, C.J.: An efficient adversarial attack for tree
  ensembles. In: Larochelle, H., Ranzato, M., Hadsell, R., Balcan, M., Lin, H.
  (eds.) Advances in Neural Information Processing Systems. vol.~33, pp.
  16165--16176. Curran Associates, Inc. (2020),
  \url{https://proceedings.neurips.cc/paper/2020/file/ba3e9b6a519cfddc560b5d53210df1bd-Paper.pdf}

\bibitem{small-step1}
Zhang, J., et~al.: Towards robust resnet: A small step but a giant leap. In:
  Proceedings of the Twenty-Eight International Joint Conference on Artificial
  Intelligence (IJCAI-19) (2019)

\bibitem{small-step}
Zhang, J., et~al.: Towards robust resnet: A small step but a giant leap. In:
  Proceedings of the Twenty-Eight International Joint Conference on Artificial
  Intelligence (IJCAI-19) (2019)

\end{thebibliography}

\newpage

\appendix

\section{Background on Optimal Transport}\label{appsec:ot}

The Wasserstein space $\mathbb{W}_2(\Omega)$ with $\Omega$ a convex and compact subset of $\mathbb{R}^d$ is the space $\mathcal{P}(\Omega)$ of probability measures over $\Omega$, equipped with the distance $W_2$ given by the solution to the optimal transport problem
\begin{equation}
    \label{appeq:kantorovich}
    W_2^2(\alpha, \beta) = \min_{\gamma \in \Pi(\alpha, \beta)} \int_{\Omega \times \Omega} \|x - y \|^2 \diff \gamma(x,y)
\end{equation}
where $\Pi(\alpha, \beta)$ is the set of probability distribution over $\Omega \times \Omega$ with first marginal $\alpha$ and second marginal $\beta$, i.e. $\Pi(\alpha, \beta) = \{ \gamma \in \mathcal{P}(\Omega \times \Omega) \ | \ {\pi_1}_\sharp \gamma = \alpha, \ {\pi_2}_\sharp \gamma = \beta \}$ where $\pi_1(x,y) = x$ and $\pi_2(x,y) = y$. The optimal transport problem can be seen as looking for a transportation plan minimizing the cost of displacing some distribution of mass from one configuration to another. This problem indeed has a solution in our setting (see for example \cite{santambrogio2015,villani}). If $\alpha$ is absolutely continuous and $\partial \Omega$ is $\alpha$-negligible then the problem in (\ref{appeq:kantorovich}) (called the Kantorovich problem) has a unique solution and is equivalent to the following problem, called the Monge problem,
\begin{equation}
    \label{appeq:monge}
    W_2^2(\alpha, \beta) = \min_{T \text{ s.t. } T_\sharp \alpha = \beta} \int_{\Omega} \|T(x) - x \|^2 \diff \alpha (x)
\end{equation}
and this problem has a unique solution $T^\star$ linked to the solution $\gamma^\star$ of (\ref{appeq:kantorovich}) through $\gamma^\star = (\text{id}, T^\star)_\sharp \alpha$. Another equivalent formulation of the optimal transport problem in this setting is the dynamical formulation (\cite{brenier}). Here, instead of directly pushing samples of $\alpha$ to $\beta$ using $T$, we can equivalently displace mass, according to a continuous flow with velocity $v_t: \mathbb{R}^d \to \mathbb{R}^d$. This implies that the density $\alpha_t$ at time $t$ satisfies the \textit{continuity equation} ${\partial_t \alpha_t + \nabla \cdot (\alpha_t v_t) =0}$, assuming that initial and final conditions are given by $\alpha_0=\alpha$ and $\alpha_1 = \beta$ respectively. In this case, the optimal displacement is the one that minimizes the total action caused by $v$ :
\begin{align}
    \label{appeq:brenier1}
    W_2^2(\alpha, \beta) = & \min_v \int_0^1 \|v_t\|^2_{L^2(\alpha_t)}\diff t \\
    & \text{s.t. } \partial_t \alpha_t + \nabla \cdot (\alpha_t v_t) = 0, \ \alpha_0 = \alpha, \alpha_1= \beta \nonumber
\end{align}
Instead of describing the density's evolution through the continuity equation, we can describe the paths $\phi^x_t$ taken by particles at position $x$ from $\alpha$ when displaced along the flow $v$. Here $\phi^x_t$ is the position at time $t$ of the particle that was at $x \sim \alpha$ at time 0. The continuity equation is then equivalent to $\partial_t\phi_t^x = v_t(\phi_t^x)$. See chapters 4 and 5 of \cite{santambrogio2015} for details. Rewriting the conditions as necessary, Problem (\ref{appeq:brenier1}) becomes
\begin{align}
    \label{appeq:brenier2}
    W_2^2(\alpha, \beta) = & \min_v \int_0^1 \|v_t\|_{L^2((\phi^\cdot_t)_\sharp \alpha)}^2 \diff t \\
    & \text{s.t. } \partial_t\phi_t^x = v_t(\phi_t^x), \ \phi^\cdot_0 = \text{id}, (\phi^\cdot_1)_\sharp \alpha = \beta \nonumber
\end{align}
and the optimal transport map $T^\star$ that solves (\ref{appeq:monge}) is in fact $T^\star(x) = \phi_1^x$ for $\phi$ that solves the continuity equation together with the optimal $v^\star$ from (\ref{appeq:brenier2}). The optimal vector field is related to the optimal map through $v^\star_t = (T^\star - \text{id}) \circ (T^\star_t)^{-1}$, where $T^\star_t = (1 - t) \text{id} + t T^\star$ and is invertible. This simply means that the points move in straight lines and with constant speed from $x$ to $T^\star(x)$. The $\mathbb{W}_2(\Omega)$ space is a metric geodesic space, and the geodesic between $\alpha$ and $\beta$ is the curve $\alpha_t$ found while solving (\ref{appeq:brenier1}). It is also given by $\alpha_t = (\pi_t)_\sharp \gamma^\star = (T^\star_t)_\sharp \alpha$, where $\pi_t(x,y) = (1 - t) x + t y$. We refer to Section 5.4 of \cite{santambrogio2015} for these results on optimal transport.

Optimal transport maps have some regularity properties under some boundedness assumptions. We mention the following result from \cite{figalli}:

\begin{theorem}
\label{thm:regularity}
Suppose there are $X,Y$, bounded open sets, such that the densities of $\alpha$ and $\beta$ are null in their respective complements and bounded away from zero and infinity over them respectively. \\
Then, if $Y$ is convex, there exists $\eta>0$ such that the optimal transport map $T$ between $\alpha$ and $\beta$ is $C^{0,\eta}$ over $X$. \\
If $Y$ isn't convex, there exists two relatively closed sets $A,B$ in $X,Y$ respectively such that $T\in C^{0,\eta}(X\setminus A,Y\setminus B)$, where $A$ and $B$ are of null Lebesgue measure. \\
Moreover, if the densities are in $C^{k,\eta}$, then $C^{0,\eta}$ can be replaced by $C^{k+1,\eta}$ in the conclusions above. In particular, if the densities are smooth, then the transport map is a diffeomorphism. 
\end{theorem}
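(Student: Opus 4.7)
The plan is to reduce the statement to the regularity theory of the Monge–Ampère equation, via Brenier's theorem, and then invoke Caffarelli's strict convexity and $C^{1,\alpha}$ theorems (with refinements by Figalli/De Philippis in the non-convex target case). First I would recall that, under our hypotheses ($\alpha$ absolutely continuous on a compact $\Omega$ with $\partial\Omega$ $\alpha$-negligible), Brenier's theorem gives a convex potential $u:\mathbb{R}^d\to\mathbb{R}\cup\{+\infty\}$ such that the unique optimal transport map is $T=\nabla u$. In the Alexandrov sense, $u$ then solves the Monge–Ampère equation
\begin{equation*}
\det D^2 u(x) \;=\; \frac{f(x)}{g(\nabla u(x))} \qquad \text{a.e. } x\in X,
\end{equation*}
where $f,g$ are the densities of $\alpha,\beta$ respectively. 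Since $f$ and $g$ are bounded away from $0$ and $\infty$ on $X$ and $Y$, the right-hand side is pinched between two positive constants on $X$, which is precisely the hypothesis of Caffarelli's regularity theory.

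For the convex target case, the key step is to establish \emph{strict convexity} of $u$ on $X$. This is Caffarelli's theorem: if the target $Y=\nabla u(X)$ is convex and the density ratio is bounded between two positive constants, then $u$ admits no non-trivial supporting affine function (no flat pieces in its graph above $X$). I would prove this by contradiction in the standard way: assume a segment lies in the graph of $u$, then use the convexity of the image set to push the mass obtained from the subdifferential of a supporting hyperplane into $Y$, and derive a contradiction with the measure equation $(\nabla u)_\sharp(f\,dx)=g\,dy$ via a blow-up/dimension argument on the contact set. Once strict convexity is established, Caffarelli's second theorem upgrades this to $C^{1,\eta}_{\mathrm{loc}}(X)$ regularity of $u$ for some $\eta=\eta(d,\lambda,\Lambda)>0$ by a De Giorgi–type iteration scheme on sections $S_h(x_0)=\{y:u(y)<u(x_0)+\nabla u(x_0)\cdot(y-x_0)+h\}$ and the affine invariance of the Monge–Ampère equation. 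Since $T=\nabla u$, this gives $T\in C^{0,\eta}(X)$.

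For the non-convex target case, strict convexity of $u$ may fail, but only on an exceptional set. I would follow Figalli–De Philippis: define $A\subset X$ as the set of points where $u$ fails to be strictly convex (equivalently, where $\partial u$ contains a non-trivial segment), and $B\subset Y$ as its image under $\partial u$. A covering/measure-theoretic argument using the Monge–Ampère measure shows both $A$ and $B$ are Lebesgue-negligible and relatively closed in $X,Y$, and on $X\setminus A$ the previous strict-convexity-to-$C^{1,\eta}$ argument applies locally, giving $T\in C^{0,\eta}(X\setminus A,Y\setminus B)$. The higher regularity clause follows by a standard Schauder bootstrap: once $T$ is continuous, the Monge–Ampère equation has $C^{k,\eta}$ coefficients, so Caffarelli's $C^{2,\eta}$ Schauder estimates apply to $u$, yielding $u\in C^{k+2,\eta}$ and hence $T\in C^{k+1,\eta}$; in the smooth case this iterates to $C^\infty$, and the strict convexity plus the analogous argument for $u^*$ make $T$ a diffeomorphism.

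The main obstacle is the strict convexity step: it is the non-trivial analytic heart of Caffarelli's theory and the one place where convexity of $Y$ enters essentially. Once strict convexity is in hand, the passage to $C^{1,\eta}$ and the bootstrap are comparatively mechanical. Consequently, in a self-contained write-up I would devote the bulk of the argument to the no-segments lemma for $u$, carefully handling the interplay between the image set $\nabla u(X)$, the density bounds, and the affine renormalizations of sections; the non-convex case and the Schauder bootstrap are then relatively short corollaries.
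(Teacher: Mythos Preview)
The paper does not prove this theorem: it is stated in Appendix~\ref{appsec:ot} as a background result quoted from \cite{figalli}, with no argument given. There is therefore nothing in the paper to compare your proposal against.

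That said, your outline is an accurate high-level summary of the standard proof in the literature: Brenier's theorem reduces the problem to Alexandrov solutions of the Monge--Amp\`ere equation with right-hand side pinched between two positive constants; Caffarelli's strict-convexity theorem (using convexity of the target $Y$) rules out flat pieces in the potential $u$; Caffarelli's $C^{1,\eta}$ estimate then upgrades strict convexity to H\"older regularity of $\nabla u = T$; the Figalli--De~Philippis partial-regularity argument handles the non-convex target by excising a null singular set; and a Schauder bootstrap gives the higher-order statement. Your identification of the strict-convexity step as the crux is correct. If you intend to include a self-contained proof, be aware that this is a substantial piece of analysis well beyond the scope of the paper, which is why it is simply cited.
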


A final result that we mention is the following, which says that the inverse of the optimal transport map between $\alpha$ and $\beta$ is the optimal transport map from $\beta$ to $\alpha$,
\begin{theorem}
\label{thm:inverse}
If $\alpha$ and $\beta$ are absolutely continuous measures supported respectively on compact subsets $X$ and $Y$ of $\mathbb{R}^d$ with negligible boundaries, then there exists a unique couple $(T,S)$ of functions such that the five following points hold
\begin{itemize}
    \item $T:X \rightarrow Y$ and $S:Y \rightarrow X$
    \item $T_\# \alpha = \beta$ and $S_\# \beta = \alpha$
    \item $T$ is optimal for the Monge problem from $\alpha$ to $\beta$
    \item $S$ is optimal for the Monge problem from $\beta$ to $\alpha$ 
    \item $T \circ S \stackrel{\beta-\text{a.s.}}{=} \texttt{id}$ and $S \circ T \stackrel{\alpha-\text{a.s.}}{=} \texttt{id}$
\end{itemize}
\end{theorem}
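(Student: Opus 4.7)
The plan is to reduce the theorem to a direct application of Brenier's theorem together with the Legendre conjugate construction. Existence and uniqueness of $T$ and $S$ separately are not really the issue — what must be argued with care is the mutual inverse property.

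First, I would invoke Brenier's theorem (available under the present hypotheses, since $\alpha$ and $\beta$ are absolutely continuous on compact sets $X,Y$ with negligible boundaries) to assert the existence of a unique $\alpha$-a.e. map $T = \nabla \phi$ with $\phi: \mathbb{R}^d \to \mathbb{R} \cup \{+\infty\}$ convex and lower semicontinuous, solving the Monge problem from $\alpha$ to $\beta$. Applying Brenier's theorem in the opposite direction gives the unique $\beta$-a.e. optimal map $S = \nabla \psi$ from $\beta$ to $\alpha$, with $\psi$ convex l.s.c. After modifying on $\alpha$- and $\beta$-null sets, $T:X \to Y$ and $S:Y \to X$ can be taken everywhere defined on the respective supports.

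Next, I would show $S = \nabla \phi^\ast$ a.s., where $\phi^\ast$ is the Legendre conjugate of $\phi$. Define $\widetilde S := \nabla \phi^\ast$. Standard convex analysis gives $\nabla \phi^\ast \circ \nabla \phi = \texttt{id}$ wherever $\phi$ is differentiable and $\nabla \phi(x) \in \text{dom}(\nabla \phi^\ast)$, so $\widetilde S \circ T = \texttt{id}$ $\alpha$-a.s. Pushing forward, $\widetilde S_\# \beta = \widetilde S_\# (T_\# \alpha) = (\widetilde S \circ T)_\# \alpha = \alpha$, so $\widetilde S$ transports $\beta$ to $\alpha$. Because $\widetilde S$ is the gradient of a convex function and $\beta$ is absolutely continuous, Brenier's characterization of optimal maps identifies $\widetilde S$ as an optimal transport map from $\beta$ to $\alpha$. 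The uniqueness clause of Brenier then forces $\widetilde S = S$ $\beta$-a.s., and after modifying $S$ on a $\beta$-null set we may take equality everywhere on $Y$. The symmetric argument (applied to $\psi$) shows $T = \nabla \psi^\ast$ $\alpha$-a.s., so $T \circ S = \texttt{id}$ $\beta$-a.s. by the analogous conjugate identity. The five listed properties then all hold.

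Uniqueness of the pair $(T,S)$ is inherited from the individual uniqueness in Brenier's theorem: any other couple satisfying the first four conditions would have to coincide with $(T,S)$ modulo $\alpha$- and $\beta$-null sets respectively, and the fifth condition pins down the everywhere representative on the supports.

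The main obstacle is not a deep estimate but a bookkeeping one: making the convex-conjugate identity $\nabla \phi^\ast \circ \nabla \phi = \texttt{id}$ hold on a set of full $\alpha$-measure requires that $\beta = T_\# \alpha$ charge only points where $\phi^\ast$ is differentiable. This is exactly where the absolute continuity of $\beta$ is used, via Rademacher/Alexandrov: $\phi^\ast$ is differentiable Lebesgue-a.e., hence $\beta$-a.e., so the equalities $T \circ S = \texttt{id}$ $\beta$-a.s. and $S \circ T = \texttt{id}$ $\alpha$-a.s. are both legitimate. Everything else is a clean application of Brenier plus Legendre duality.
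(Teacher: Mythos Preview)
Your argument via Brenier's theorem and Legendre duality is correct and is the standard textbook proof of this statement. However, the paper does not actually prove this theorem: it is stated in the background appendix on optimal transport as a known result, with the surrounding material attributed to Santambrogio's monograph. So there is no ``paper's own proof'' to compare against here; the authors simply quote the result and use it in the proof of Theorem~\ref{thm1}. Your writeup is essentially what one finds in the cited reference, and nothing more is needed.
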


\section{Background on Numerical Methods for ODEs}\label{appsec:num}

We refer to \cite{numerical} for this quick background on numerical methods for ODEs. Consider the Cauchy problem $x' = f(t,x)$ with initial condition $x(t_0) = x_0$ and a subdivision $t_0 < t_1 < .. < t_N = t_0 + T$ of $[t_0, t_0 + T]$. Denote the time-steps $h_n := t_{n+1} - t_n$ for $0 \leq n < N$ and define $h_\text{max} := \max h_n$. In a one-step method, an approximation of $x(t_n)$ is $x_n$ given by
$$ \frac{x_{n+1} - x_n}{h_n} = \phi(t_n, x_n, h_n) $$
For $\phi(t,x,h) = f(t,x)$, we get Euler's method: $x_{n+1} = x_n + h_n f(t_n, x_n)$.

\begin{definition}
\emph{(Consistency and order)}
\label{def:consistency}
For a one-step method, the \textit{consistency errors} $e_n$, for $0 \leq n < N$, are
\begin{equation*}
e_n = \frac{x(t_{n+1}) - \Phi(t_n, x(t_n), h_n)}{h_n} = \frac{x(t_{n+1}) - x(t_n)}{h_n} - \phi(t_n, x(t_n), h_n)
\end{equation*}

where $x$ is solution. The \textit{local (truncation) errors} are $h_n e_n$. The method is \textit{consistent} if $\max | e_n |$ goes to zero as $h_\text{max}$ goes to zero. For $p \in \mathbb{N}^*$, the method has \textit{order $p$} if $\max | e_n | \leq C h_\text{max}^p$ for a constant $C$ that depends on $f, t_0$ and $T$.
\end{definition}

\begin{theorem}
\emph{(Consistency criterion)}
\label{thm:consistency}
If $f$ and $\phi$ are continuous then the one-step method is consistent if and only if $\phi(t,x,0) = f(t,x)$ for all $(t,x)$.
\end{theorem}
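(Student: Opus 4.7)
The plan is to prove both directions by decomposing the consistency error as
\[ e_n = \Bigl( \tfrac{x(t_{n+1}) - x(t_n)}{h_n} - f(t_n, x(t_n)) \Bigr) + \bigl( f(t_n, x(t_n)) - \phi(t_n, x(t_n), h_n) \bigr), \]
noting that each bracket tends to $0$ as $h_n \to 0$ precisely when $\phi(t,x,0) = f(t,x)$. Since $f$ is continuous, the solution $x$ is $C^1$ with $x'(t) = f(t, x(t))$, which is what makes the first bracket manageable.

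For the sufficient direction ($\Leftarrow$), suppose $\phi(t,x,0) = f(t,x)$. The first bracket equals $x'(\xi_n) - x'(t_n)$ for some $\xi_n \in (t_n, t_{n+1})$ by the mean value theorem, and is therefore bounded by the modulus of continuity of $x'$ on the compact interval $[t_0, t_0 + T]$ evaluated at $h_{\max}$; this tends to $0$ uniformly in $n$. For the second bracket, the graph $\Gamma = \{(t, x(t), 0) : t \in [t_0, t_0+T]\}$ is compact, so $\phi$ is uniformly continuous on a compact neighborhood of $\Gamma$. Combined with $\phi(t_n, x(t_n), 0) = f(t_n, x(t_n))$, this forces $|f(t_n, x(t_n)) - \phi(t_n, x(t_n), h_n)| \to 0$ uniformly in $n$ as $h_{\max} \to 0$. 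Adding the two uniform bounds gives $\max_n |e_n| \to 0$.

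For the necessary direction ($\Rightarrow$), fix an arbitrary point $(t_*, x_*)$. By Peano's existence theorem (since $f$ is continuous), the Cauchy problem $x' = f(t,x)$, $x(t_*) = x_*$ admits a solution on some interval $[t_*, t_* + T]$. For each small $h > 0$, build a subdivision of this interval with $t_0 = t_*$ and $h_0 = h$; consistency gives $e_0 \to 0$ as $h \to 0$, i.e.
\[ \frac{x(t_* + h) - x_*}{h} - \phi(t_*, x_*, h) \xrightarrow[h \to 0]{} 0. \]
The first term converges to $x'(t_*) = f(t_*, x_*)$, and the second to $\phi(t_*, x_*, 0)$ by continuity of $\phi$, so equating limits yields $\phi(t_*, x_*, 0) = f(t_*, x_*)$. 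Since $(t_*, x_*)$ was arbitrary, the identity holds everywhere.

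The main obstacle is the uniformity issue in the sufficient direction: continuity of $\phi$ and $x'$ only gives pointwise convergence at each grid node, but consistency asks for $\max_n |e_n| \to 0$. Upgrading to uniform convergence requires the compactness argument above, using that the relevant set of arguments $(t_n, x(t_n), h_n)$ lies in a fixed compact neighborhood of $\Gamma$ on which $\phi$ is uniformly continuous. A minor technical care point in the necessary direction is ensuring that the chosen subdivision and Cauchy problem are admissible so that the consistency hypothesis can be applied to constrain $e_0$ alone, which is why we anchor the subdivision at $(t_*, x_*)$.
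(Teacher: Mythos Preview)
The paper does not supply its own proof of this theorem: it is stated without argument as standard background on numerical ODE methods, with a blanket reference to the textbook \cite{numerical}. There is therefore nothing in the paper to compare your proposal against.

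That said, your argument is correct and is essentially the classical textbook proof. Two small remarks. First, the mean value theorem step in the sufficient direction is stated for scalar $x$; if $x$ is vector-valued one replaces it by the integral form $\frac{1}{h_n}\int_{t_n}^{t_{n+1}} x'(s)\,\mathrm{d}s - x'(t_n)$ and the same uniform-continuity bound applies. Second, in the necessary direction you implicitly use that consistency is assumed for \emph{every} admissible Cauchy problem (so that you may anchor a solution at an arbitrary $(t_*,x_*)$); this is the standard reading, and your use of Peano's theorem to produce such a solution under mere continuity of $f$ is exactly the right device. Your handling of the uniformity issue via compactness of the graph $\Gamma$ is also the standard and correct way to upgrade pointwise to uniform control.
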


\begin{theorem}
\emph{(Order criterion)}
\label{thm:order}
If $f$ is $\mathcal{C}^p$ and $\phi$ is $\mathcal{C}^p$ in $h$ then the one-step method is of order $p$ if and only if $\partial^k_h \phi (t,x,0) = \frac{1}{k+1} f^{[k]}(t,x)$ for all $(t,x)$ and $0 \leq k < p$ where $f^{[0]} = f$ and $f^{[k]} = \partial_t f^{[k-1]} + f \partial_x f^{[k-1]}$.
\end{theorem}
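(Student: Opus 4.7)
The plan is to reduce the statement to a term-by-term comparison of two Taylor expansions in $h$: one for the exact-solution difference quotient $(x(t_{n+1})-x(t_n))/h_n$, and one for the increment function $\phi(t_n,x(t_n),h_n)$. The classical key is to recognise that the iterated total derivatives $f^{[k]}$ are exactly the successive time-derivatives of any solution $x$ of $x'=f(t,x)$.

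First I would prove the lemma that, if $f\in\mathcal{C}^p$ and $x$ solves $x'=f(t,x)$, then $x^{(k+1)}(t)=f^{[k]}(t,x(t))$ for $0\le k<p$. This goes by induction on $k$: the case $k=0$ is just the ODE; assuming $x^{(k)} = f^{[k-1]}(\cdot,x(\cdot))$, differentiate in $t$ using the chain rule to obtain $x^{(k+1)}=\partial_t f^{[k-1]}+\partial_x f^{[k-1]}\cdot x' = \partial_t f^{[k-1]} + f\,\partial_x f^{[k-1]} = f^{[k]}$, matching the recursion given in the statement.

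Next, I would Taylor-expand both sides of the consistency-error formula. Applying Taylor's theorem with integral remainder in the time variable at $t_n$ and using the lemma gives
\begin{equation*}
\frac{x(t_{n+1})-x(t_n)}{h_n} \;=\; \sum_{k=0}^{p-1}\frac{h_n^{k}}{(k+1)!}\,f^{[k]}(t_n,x(t_n)) \;+\; h_n^{p}\,\rho_1(t_n,h_n),
\end{equation*}
with $\rho_1$ bounded on any compact set (since $f\in\mathcal{C}^p$). Because $\phi$ is $\mathcal{C}^p$ in $h$, a Taylor expansion in the third argument gives
\begin{equation*}
\phi(t_n,x(t_n),h_n) \;=\; \sum_{k=0}^{p-1}\frac{h_n^{k}}{k!}\,\partial_h^{k}\phi(t_n,x(t_n),0) \;+\; h_n^{p}\,\rho_2(t_n,h_n),
\end{equation*}
again with $\rho_2$ bounded on compacta. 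Subtracting yields
\begin{equation*}
e_n \;=\; \sum_{k=0}^{p-1} h_n^{k}\Bigl[\tfrac{1}{(k+1)!}f^{[k]}(t_n,x(t_n)) - \tfrac{1}{k!}\partial_h^{k}\phi(t_n,x(t_n),0)\Bigr] + O(h_n^{p}).
\end{equation*}

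From this identity the two implications fall out. For the \emph{if} direction, the stated identities $\partial_h^{k}\phi(\cdot,0)=\tfrac{1}{k+1}f^{[k]}$ kill every bracketed coefficient, leaving $|e_n|\le C h_\text{max}^{p}$ with $C$ depending only on bounds for $f$ and $\phi$ on a compact tube around the solution, which is the definition of order $p$. For the \emph{only if} direction, I argue by contradiction: let $k_0$ be the smallest index in $\{0,\ldots,p-1\}$ where $g_{k_0}(t,x):=\tfrac{1}{(k_0+1)!}f^{[k_0]}(t,x)-\tfrac{1}{k_0!}\partial_h^{k_0}\phi(t,x,0)$ does not vanish, and pick $(t^\star,x^\star)$ with $g_{k_0}(t^\star,x^\star)\neq 0$. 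Choosing the Cauchy problem with initial condition $x(t^\star)=x^\star$ and considering a single step of size $h$ starting at $t_0=t^\star$, the expansion reduces to $e_0 = h^{k_0} g_{k_0}(t^\star,x^\star) + O(h^{k_0+1})$, so $|e_0|/h^{k_0}\to |g_{k_0}(t^\star,x^\star)|>0$ as $h\to 0$. This contradicts $|e_0|\le C h^{p}$ because $k_0<p$, so all brackets must vanish and the identities hold.

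The main obstacle is bookkeeping for the \emph{only if} direction: the order-$p$ bound is a statement uniform in the subdivision for a fixed Cauchy problem, but I need to extract a pointwise identity in $(t,x)$. The clean way is the single-step argument above, which leverages the freedom to vary the initial condition, together with continuity of both $f^{[k]}$ and $\partial_h^{k}\phi(\cdot,0)$ to transfer the contradiction from a neighbourhood of $(t^\star,x^\star)$ to an actual Cauchy problem staying inside that neighbourhood for small enough $h$. Everything else is standard Taylor remainder control on a compact set, which is legitimate because $f\in\mathcal{C}^p$ and $\phi\in\mathcal{C}^p$ in $h$ as hypothesised.
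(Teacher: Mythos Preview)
The paper does not actually prove this theorem: it appears in Appendix~B as background material on numerical methods for ODEs, stated without proof and attributed to \cite{numerical}. So there is no ``paper's own proof'' to compare against.

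That said, your argument is the standard one and is correct. The induction lemma $x^{(k+1)}=f^{[k]}(\cdot,x(\cdot))$ is the right identification, the two Taylor expansions are legitimate under the stated regularity ($f\in\mathcal{C}^p$ makes the solution $\mathcal{C}^{p+1}$, and $\phi$ is assumed $\mathcal{C}^p$ in $h$), and the subtraction yields exactly the coefficient comparison you wrote. The \emph{if} direction is immediate. For the \emph{only if} direction your single-step contradiction at a freely chosen initial point $(t^\star,x^\star)$ is the clean way to extract the pointwise identity from the uniform order bound; the freedom to choose the Cauchy problem is essential and you invoke it correctly. No gaps.
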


\begin{corollary}
\emph{(Consistency and order of Euler's method)}
\label{cor:euler-consistency}
If $f$ is continuous then Euler's method is consistent. If $f$ is $\mathcal{C}^1$ then Euler's method has order 1.
\end{corollary}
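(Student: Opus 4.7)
The plan is to apply the two preceding criteria directly. Euler's method corresponds to the increment function $\phi(t,x,h) = f(t,x)$, which is a particularly tractable special case of the general one-step scheme because $\phi$ does not depend on $h$ at all.

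For the first assertion, I would invoke Theorem~\ref{thm:consistency}. Since $\phi(t,x,h) = f(t,x)$, we have $\phi(t,x,0) = f(t,x)$ identically, so the consistency condition is satisfied trivially. The continuity hypothesis of Theorem~\ref{thm:consistency} on both $f$ and $\phi$ reduces to continuity of $f$ alone, since continuity of $\phi$ in $(t,x,h)$ is inherited from continuity of $f$ in $(t,x)$. Consistency then follows.

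For the second assertion, I would apply Theorem~\ref{thm:order} with $p=1$. The regularity hypotheses demand $f \in \mathcal{C}^1$ and $\phi \in \mathcal{C}^1$ in $h$; the latter is automatic since $\phi$ is constant in $h$. The order condition to verify is only the $k=0$ case, namely $\partial^0_h \phi(t,x,0) = \tfrac{1}{1} f^{[0]}(t,x)$, which is exactly $\phi(t,x,0) = f(t,x)$ and has already been checked. Theorem~\ref{thm:order} then gives order $1$.

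There is no real obstacle here: the corollary is a direct specialization of the general one-step criteria to the case where the increment function coincides with $f$ and is independent of $h$. The only thing to be careful about is confirming that the regularity hypotheses of Theorems~\ref{thm:consistency} and \ref{thm:order} on $\phi$ are automatically subsumed by the stated hypotheses on $f$, which they are.
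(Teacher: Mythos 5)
Your proof is correct and matches the paper's intent exactly: the corollary is presented there as an immediate consequence of the consistency and order criteria, obtained by specializing the increment function to $\phi(t,x,h)=f(t,x)$, which is precisely what you do. No gaps.
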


\begin{definition}
\emph{(Zero-stability)}
\label{def:zero-stability}
A one-step method is \textit{zero-stable} (or \textit{stable}) if $\exists \ S > 0$ such that for all $(x_n)_{0 \leq n \leq N}$, $(\Tilde{x}_n)_{0 \leq n \leq N}$ and $(\varepsilon_n)_{0 \leq n < N}$ satisfying
\begin{equation*}
\label{eq:stability2}
\frac{x_{n+1} - x_n}{h_n} = \phi(t_n, x_n, h_n)  
\end{equation*}
and
\begin{equation*}
\frac{\Tilde{x}_{n+1} - \Tilde{x}_n}{h_n} = \phi(t_n, \Tilde{x}_n, h_n) + \epsilon_n
\end{equation*}
for $0 \leq n < N$, we have $$\max_n \| \Tilde{x}_n - x_n \| \leq S(\| \Tilde{x}_0 - x_0 \| + T \max_n | \epsilon_n |)$$, where $\epsilon_n = \varepsilon_n / h_n$. The constant $S$ is the \textit{stability constant} of the method.
\end{definition}

\begin{theorem}
\emph{(Zero-stability criterion)}
\label{thm:zero-stability-criterion}
If $\phi$ is uniformly $L$-Lipschitz in its second variable, then the one-step method is stable with constant $e^{LT}$.
\end{theorem}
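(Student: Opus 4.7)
The plan is a standard discrete Gronwall argument. First, I would subtract the two recurrence relations defining $x_n$ and $\tilde{x}_n$ in Definition \ref{def:zero-stability} to obtain
\begin{equation*}
\tilde{x}_{n+1} - x_{n+1} = (\tilde{x}_n - x_n) + h_n \bigl( \phi(t_n, \tilde{x}_n, h_n) - \phi(t_n, x_n, h_n) \bigr) + h_n \epsilon_n,
\end{equation*}
where I used that the perturbation term in the perturbed recurrence reads $\varepsilon_n = h_n \epsilon_n$. Letting $e_n := \|\tilde{x}_n - x_n\|$, the triangle inequality combined with the $L$-Lipschitz assumption on $\phi$ in its second variable yields
\begin{equation*}
e_{n+1} \leq (1 + L h_n) e_n + h_n |\epsilon_n|.
\end{equation*}

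Next, I would iterate this scalar inequality. Using $1 + L h_n \leq e^{L h_n}$ and induction on $n$, the solution of the recurrence gives
\begin{equation*}
e_n \leq e^{L(t_n - t_0)} e_0 + \sum_{k=0}^{n-1} e^{L(t_n - t_{k+1})} h_k |\epsilon_k|.
\end{equation*}
This is the discrete analogue of the variation-of-constants formula for the continuous Gronwall lemma. To conclude, I would bound each exponential factor by $e^{LT}$, since $t_n - t_0 \leq T$ and $t_n - t_{k+1} \leq T$, and then bound the sum $\sum_{k=0}^{n-1} h_k$ by $T$, together with pulling $\max_k |\epsilon_k|$ out of the sum, to get
\begin{equation*}
e_n \leq e^{LT} \Bigl( e_0 + T \max_{0 \leq k < N} |\epsilon_k| \Bigr).
\end{equation*}
Taking the maximum over $n$ yields the stability inequality with constant $S = e^{LT}$.

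The main technical step is the iteration of the one-step inequality $e_{n+1} \leq (1 + Lh_n) e_n + h_n |\epsilon_n|$. It is routine once one writes it down as $e_n \leq \prod_{k=0}^{n-1}(1 + L h_k)\, e_0 + \sum_{k=0}^{n-1} \bigl(\prod_{j=k+1}^{n-1}(1+Lh_j)\bigr) h_k |\epsilon_k|$ and then majorizes each product by $\exp\!\bigl(L \sum h_j\bigr) \leq e^{LT}$; no obstacle beyond careful bookkeeping of indices is expected. The Lipschitz hypothesis is used exactly once, to pass from the exact difference of $\phi$ values to $L e_n$, and it is the uniformity in $t$ and $h$ that guarantees the single constant $L$ (and hence the single constant $S = e^{LT}$) works across all steps of the subdivision.
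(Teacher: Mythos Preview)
Your argument is correct and is exactly the standard discrete Gronwall proof of this classical stability criterion. Note, however, that the paper itself does not give a proof of this statement: it is listed in Appendix~\ref{appsec:num} as background material on numerical ODE methods, with a reference to \cite{numerical}, and no proof is supplied. So there is nothing in the paper to compare your proposal against; your write-up would simply fill in the omitted textbook argument.
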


\begin{corollary}
\emph{(Zero-stability of Euler's method)}
\label{cor:euler-stability}
If $f$ is Lipschitz in its second variable, then Euler's method is stable.
\end{corollary}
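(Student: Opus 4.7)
The plan is to deduce the corollary as an immediate specialization of the zero-stability criterion (Theorem \ref{thm:zero-stability-criterion}). For Euler's method the increment function is $\phi(t,x,h) = f(t,x)$, which does not depend on $h$ at all. Consequently the Lipschitz behavior of $\phi$ in its second variable is inherited verbatim from that of $f$ in $x$: if $f$ is $L$-Lipschitz in its second variable (uniformly in $t$), then for every $(t,h)$,
\[ \|\phi(t,x,h) - \phi(t,y,h)\| = \|f(t,x) - f(t,y)\| \leq L \|x - y\|, \]
so $\phi$ is itself uniformly $L$-Lipschitz in its second variable with the very same constant.

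First I would record this one-line observation, then invoke Theorem \ref{thm:zero-stability-criterion} directly to conclude that Euler's method is zero-stable, with stability constant $S = e^{LT}$ where $T$ is the length of the integration interval. There is essentially no obstacle to overcome: the Lipschitz hypothesis on $f$ is precisely tailored so that the hypothesis of the general criterion is met without any additional work, and no estimates beyond the inequality displayed above are needed. The only minor point worth flagging is the uniformity requirement; I would read ``Lipschitz in its second variable'' as uniform in $t$, which is standard for Cauchy problems on a bounded time interval and ensures that the constant $L$ is independent of $t \in [t_0, t_0 + T]$, so that the exponent in $e^{LT}$ is well defined and finite.
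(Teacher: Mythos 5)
Your proposal is correct and is exactly the argument the paper intends: the corollary is an immediate instance of Theorem \ref{thm:zero-stability-criterion}, since Euler's increment function $\phi(t,x,h)=f(t,x)$ trivially inherits the uniform Lipschitz property from $f$, yielding stability with constant $e^{LT}$. Nothing further is needed.
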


\begin{definition}
\emph{(Convergence)}
\label{def:convergence}
A numerical method \textit{converges} if its \textit{global error} $\max_n \| x(t_n) - x_n \|$ goes to zero as $h_{\text{max}}$ goes to zero.
\end{definition}

\begin{theorem}
\emph{(Convergence criterion)}
\label{thm:convergence}
If a method is consistent and stable with stability constant $S$, then it converges and $\max_n \| x(t_n) - x_n \| \leq S T \max |e_n|$. If the method is of order $p$ with constant $C$, then $\max_n \| x(t_n) - x_n \| \leq S T C h_{\text{max}}^p$
\end{theorem}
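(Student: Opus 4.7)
The plan is to view the exact solution $x(t_n)$ as a perturbation of the numerical scheme and then feed that perturbation straight into the stability estimate (Definition \ref{def:zero-stability}). Specifically, by the definition of the consistency errors in Definition \ref{def:consistency}, the sampled true solution satisfies
\begin{equation*}
\frac{x(t_{n+1}) - x(t_n)}{h_n} = \phi(t_n, x(t_n), h_n) + e_n,
\end{equation*}
which is exactly the perturbed recurrence appearing in Definition \ref{def:zero-stability} with the role of $\tilde{x}_n$ played by $x(t_n)$ and with $\epsilon_n := e_n$ (equivalently $\varepsilon_n = h_n e_n$). The computed iterates $x_n$ themselves satisfy the unperturbed recurrence.

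Next I would invoke zero-stability with constant $S$ applied to the pair $((x_n),(x(t_n)))$. This gives immediately
\begin{equation*}
\max_n \|x(t_n) - x_n\| \le S\bigl(\|x(t_0) - x_0\| + T\max_n |e_n|\bigr).
\end{equation*}
Since the numerical method starts from the exact initial condition, $x(t_0) = x_0$, so the first term vanishes and the first claimed bound
\begin{equation*}
\max_n \|x(t_n) - x_n\| \le S T \max_n |e_n|
\end{equation*}
follows. In particular, consistency ($\max |e_n| \to 0$ as $h_{\max} \to 0$) forces the global error to zero, establishing convergence.

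For the second statement, if the method is of order $p$ with constant $C$, then by definition $\max_n |e_n| \le C h_{\max}^p$, so plugging this into the bound just derived yields $\max_n \|x(t_n) - x_n\| \le STC h_{\max}^p$.

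I do not anticipate a genuine obstacle here; the result is essentially a two-line consequence of the definitions once one realizes that the consistency errors are precisely the defect needed to make the exact solution a perturbed numerical sequence. The only care required is bookkeeping between $e_n$ and $\varepsilon_n = h_n e_n$ in Definition \ref{def:zero-stability} (the definition is written so that the $1/h_n$ inside $\epsilon_n = \varepsilon_n/h_n$ cancels, and it is $\max_n |e_n|$, not $\max_n |\varepsilon_n|$, that enters the final bound), and noting the crucial simplification $x(t_0) = x_0$ which removes the initial-data term from the stability estimate.
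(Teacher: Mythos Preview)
Your argument is correct and is exactly the standard textbook proof of the Lax-type convergence criterion: recognize the sampled exact solution as a perturbed numerical sequence with defects $e_n$, apply zero-stability to the pair $\bigl((x_n),(x(t_n))\bigr)$, and use $x(t_0)=x_0$ to kill the initial-data term. Note, however, that the paper does not actually supply its own proof of this theorem; it is stated in Appendix~\ref{appsec:num} purely as background material, with an explicit reference to \cite{numerical} for the details. So there is nothing to compare against beyond confirming that your derivation matches the classical one the paper is citing.
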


\begin{corollary}
\emph{(Convergence of Euler's method)}
\label{cor:euler-convergence}
Euler's method converges if $f$ is $\mathcal{C}^0$ and Lipschitz in $x$. If $f$ is also $\mathcal{C}^1$ then it converges with speed $O(h_{\text{max}})$.
\end{corollary}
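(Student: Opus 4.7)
The plan is to combine the three preceding results — the consistency corollary, the zero-stability corollary, and the convergence criterion — to obtain both claims with essentially no additional work. First I would assume $f$ is $\mathcal{C}^0$ and Lipschitz in its second variable $x$. By the consistency corollary for Euler's method, the $\mathcal{C}^0$ regularity of $f$ alone guarantees that Euler's method is consistent, i.e.\ $\max_n |e_n| \to 0$ as $h_{\max} \to 0$. By the zero-stability corollary, the Lipschitz assumption on $f$ in $x$ guarantees that Euler's method is zero-stable with some stability constant $S$ (in fact $S = e^{LT}$, by the zero-stability criterion applied to $\phi(t,x,h) = f(t,x)$, which inherits the Lipschitz constant of $f$).

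Having both consistency and stability in hand, I would invoke the convergence criterion, which yields the bound
\[
\max_n \| x(t_n) - x_n \| \;\leq\; S\, T\, \max_n |e_n|.
\]
Since the right-hand side tends to zero with $h_{\max}$, the global error vanishes, proving the first claim: Euler's method converges.

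For the second claim, I would additionally assume $f \in \mathcal{C}^1$. By the consistency and order corollary, this upgrades Euler's method from merely consistent to order $p = 1$, with some constant $C$ depending on $f$, $t_0$ and $T$. Applying the quantitative second half of the convergence criterion gives
\[
\max_n \| x(t_n) - x_n \| \;\leq\; S\, T\, C\, h_{\max},
\]
which is precisely the $O(h_{\max})$ rate claimed. Note that $\mathcal{C}^1$ on a compact interval implies Lipschitz, so the stability hypothesis is automatically available and no separate assumption is needed in this case.

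There is no real obstacle here: the statement is a direct corollary assembled from the order criterion, the Lipschitz-based stability criterion, and the convergence criterion, all stated just above. The only mild care needed is noting that the one-step map $\phi(t,x,h) = f(t,x)$ for Euler trivially inherits the Lipschitz-in-$x$ constant of $f$, so the hypothesis of the zero-stability criterion is satisfied verbatim, and that $\mathcal{C}^1$ subsumes $\mathcal{C}^0$ plus Lipschitz on compact time intervals, so the rate statement is indeed stronger than the convergence statement.
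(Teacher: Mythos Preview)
Your proposal is correct and matches the paper's intent: the corollary is stated without proof precisely because it is an immediate assembly of Corollary~\ref{cor:euler-consistency}, Corollary~\ref{cor:euler-stability}, and Theorem~\ref{thm:convergence}, exactly as you outline. One minor remark: your aside that $\mathcal{C}^1$ implies Lipschitz requires compactness in the $x$-variable as well, which is not explicitly assumed here; but since the corollary says ``also $\mathcal{C}^1$'' the Lipschitz hypothesis is still in force anyway, so this does not affect the argument.
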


\section{Proofs}\label{appsec:proofs}

\subsection{Proof of Theorem \ref{thm1}}\label{appsubsec:proof1}

\begin{proof}
A solution $v$ to (\ref{eq:ot-classif-dyn}) exists and is linked to an optimal transport map $T$ that is a solution to (\ref{eq:ot-classif-static}) through $v_t = (T - \texttt{id}) \circ T_t^{-1}$ where $T_t := (1 - t)\texttt{id} + t T$ which is invertible (see Appendix \ref{appsec:ot}). 

By Theorem \ref{thm:regularity} in Appendix \ref{appsec:ot}, being an optimal transport map, $T$ is $\eta$-Hölder on $X$. So for all $a,b \in \text{support}(\alpha_t)$ and $t \in [0,1[$, where $\alpha_t = (\phi^\cdot_t)_\sharp\alpha = (T_t)_\sharp\alpha$ with $\phi$ solving (\ref{eq:ot-classif-dyn}) with $v$, we have
\begin{equation}
\label{appeq:v-reg-1}
\| v_t(a) - v_t(b)\| \leq \| T_t^{-1}(a) - T_t^{-1}(b) \| + C \|T_t^{-1}(a) - T_t^{-1}(b)\|^\eta \nonumber
\end{equation}

Since $(\alpha_t)_{t=0}^1$ is a geodesic between $\alpha$ and $\beta = \alpha_1 = T_\sharp \alpha$, then $(\alpha_s)_{s=0}^t$ is a geodesic between $\alpha$ and $\alpha_t$ (modulo reparameterization to $[0,1]$). And since $\alpha_s = (T_s)_\sharp \alpha$, the map $T_t$ is an optimal transport map between $\alpha$ and $\alpha_t$. Therefore its inverse $T_t^{-1}$ is an optimal transport map (see Theorem \ref{thm:inverse} in Appendix \ref{appsec:ot}) and is $\eta_t$-Hölder with $0 {<} \eta_t {\leq} 1$ (being a push-forward by $T_t$, the support of $\alpha_t$ satisfies the conditions of Theorem \ref{thm:regularity} in Appendix \ref{appsec:ot}). Therefore, for all $a,b \in \text{support}(\alpha_t)$
\begin{equation}
\label{appeq:v-reg-2}
\| v_t(a) - v_t(b)\| \leq C_t \| a - b \|^{\eta_t} + C C_t^\eta \|a - b \|^{\eta \eta_t}
\end{equation}
and for all $a,b \in \text{support}(\alpha_{t_m})$
\begin{equation}
\label{appeq:r-reg-1}
\| r_m(a) - r_m(b)\| \leq \varepsilon + C_{t_m} \| a - b \|^{\eta_{t_m}} + C C_{t_m}^\eta \|a - b \|^{\eta \eta_{t_m}}
\end{equation}
by the hypothesis on $r$ and the triangle inequality. Let $K := \max_m C_{t_m} + C C_{t_m}^\eta$, $\zeta_1 := \eta \min_m \eta_{t_m}$ and $\zeta_2 := \max_m \eta_{t_m}$. Then, we have the desired result immediately from (\ref{appeq:r-reg-1}).
\end{proof}

\begin{remark}
\label{rem:convex}
If the convexity hypothesis on the support $Y$ of the target distribution $\beta$ is too strong, we still get the same results almost everywhere. More precisely, if the set $Y$ such that $\beta$ is bounded away from zero and infinity on $Y$ and is zero on $Y^\complement$ is open and bounded but not convex, then the solution map $T$ is $\eta$-Hölder almost everywhere on $X$ (see Appendix \ref{appsec:ot}).
\end{remark}

\begin{remark}
\label{rem:reg}
If the distributions $\alpha$ and $\beta$ in Theorem \ref{thm1} are $\mathcal{C}^{k,\eta}$ (i.e all derivatives up to the $k$-th derivative are $\eta$-Hölder), then the optimal transport map $T$ is $\mathcal{C}^{k+1,\eta}$. This means that the more regular the data, the more regular the network we find.
\end{remark}

\subsection{Proof of Theorems \ref{thm2} and \ref{thm3}}\label{appsubsec:proof2}

\begin{proof}
Since $T_t^{-1}(\phi_t^x) = x$, we have for any $a_0, b_0 \in X$ by the triangle inequality 
\begin{align*}
    \| r_m(a_m) - r_m(b_m) \| & \leq \| r_m(a_m) - r_m(\phi^{a_0}_{t_m}) \| + \| r_m(\phi^{a_0}_{t_m}) - v_{t_m}(\phi^{a_0}_{t_m}) \| + \\
    & + \| v_{t_m}(\phi^{a_0}_{t_m}) - v_{t_m}(\phi^{b_0}_{t_m}) \| + \| r_m(\phi^{b_0}_{t_m}) - v_{t_m}(\phi^{b_0}_{t_m}) \| + \\
    & + \| r_m(b_m) - r_m(\phi^{b_0}_{t_m}) \| 
\end{align*}
So 
\begin{align*}
    \| r_m(a_m) - r_m(b_m) \| & \leq \varepsilon + \| a_0 - b_0 \| + C \| a_0 - b_0 \|^\eta + \\
    & + L (\| a_m - \phi^{a_0}_{t_m} \| + \| b_m - \phi^{b_0}_{t_m} \|)
\end{align*}
where $L = \max_m L_m$ and $L_m$ is the Lipschitz constant of $r_m$ (which is Lipschitz being a composition of matrix multiplications and activations such as ReLU). This the bound in Theorem \ref{thm2}.

In this bound, the term $\| a_m - \phi^{a_0}_{t_m} \|$ (and likewise $\| b_m - \phi^{b_0}_{t_m} \|$) represents the distance between the point $a_m$ we get after $m$ residual blocks (i.e. after $m$ Euler steps using the approximation $r$ of $v$) and the point $\phi^{a_0}_{t_m}$ we get by following the solution vector field $v$ up to time $t_m$. By the properties of the Euler method (consistency and zero-stability, see Corollaries \ref{cor:euler-consistency}, \ref{cor:euler-stability} and \ref{cor:euler-convergence} in Appendix \ref{appsec:num}), under more regularity conditions on $v$, it is possible to bound this term. Indeed, if $v$ is $\mathcal{C}^1$ and $M$-Lipschitz in $x$ (this is not stronger than the regularity we get on $v$ through our regularization, because we still need to use \eqref{appeq:v-reg-1}), we have for constants $R,S>0$,
\begin{equation*}
\| \phi^{a_0}_{t_m} - a_m \| \leq \| \phi^{a_0}_{t_m} - \Tilde{a}_m \| + \| \Tilde{a}_m - a_m \| \leq S \varepsilon + S R h
\end{equation*}
where $\Tilde{a}_m$ comes from the Euler scheme with access to $v$ (i.e. $\Tilde{a}_{m+1} {:=} \Tilde{a}_m {+} h v_{t_m}(\Tilde{a}_m)$ and $\Tilde{a}_0 {:=} a_0$), $R$ is the consistency constant of the Euler method and $S$ is its zero-stability constant. Likewise, we get the same bound for $\| b_m - \phi^{b_0}_{t_m} \|$. 

If $a_0, b_0 \notin X$, we need to introduce $\hat{a}_0 := \text{Proj}_X (a_o)$ and $\hat{b}_0 := \text{Proj}_X (b_o)$ to apply (\ref{appeq:v-reg-1}). We now get
\begin{align*}
    \| r_m(a_m) - r_m(b_m) \| & \leq \varepsilon + \| a_0 - b_0 \| + C \| a_0 - b_0 \|^\eta + \\
    & + L (\| a_m - \phi^{\hat{a}_0}_{t_m} \| + \| b_m - \phi^{\hat{b}_0}_{t_m} \|)
\end{align*}
Bounding the terms $\| a_m - \phi^{\hat{a}_0}_{t_m} \|$ and $\| b_m - \phi^{\hat{b}_0}_{t_m} \|$ now gives
\begin{equation*}
\| \phi^{\hat{a}_0}_{t_m} - a_m \| \leq \| a_m - \Tilde{a}_m \| + \| \Tilde{a}_m - \phi^{\hat{a}_0}_{t_m} \| \leq S (\| a_0 - \hat{a}_0 \| + \varepsilon) + S R h
\end{equation*}
where $\Tilde{a}_m$ now comes from the Euler scheme with access to $v$ that starts at $\hat{a}_0$ (meaning $\Tilde{a}_{m+1} {:=} \Tilde{a}_m {+} h v_{t_m}(\Tilde{a}_m)$ and $\Tilde{a}_0 {:=} \hat{a}_0$). Likewise, we get the same bound for $\| b_m - \phi^{\hat{b}_0}_{t_m} \|$. 

Since $\| a_0 - \hat{a}_0 \| = \text{dist}(a_0,X)$ and $\| b_0 - \hat{b}_0 \| = \text{dist}(b_0,X)$, we get the bound in Theorem \ref{thm3}. Note that if we use the stability of the ODE instead of the Euler method to bound $\| a_m - \phi^{\hat{a}_0}_{t_m} \|$ we get the same result. Indeed, if $\Tilde{a}_m$ again comes from the Euler scheme with access to $v$ that starts at $a_0$ (meaning $\Tilde{a}_{m+1} {:=} \Tilde{a}_m {+} h v_{t_m}(\Tilde{a}_m)$ and $\Tilde{a}_0 {:=} a_0$), we can write, for some constant $F>0$
\begin{align*}
    \| \phi^{\hat{a}_0}_{t_m} - a_m \| & \leq \| a_m - \Tilde{a}_m \| + \| \Tilde{a}_m - \phi^{a_0}_{t_m} \| + \| \phi^{a_0}_{t_m} - \phi^{\hat{a}_0}_{t_m} \| \\
    & \leq S \varepsilon + S R h + F \|a_0 - \hat{a}_0 \|
\end{align*}
since
\begin{align*}
    \| \phi^{a_0}_{t_m} - \phi^{\hat{a}_0}_{t_m} \| & \leq \| a_0 - \hat{a}_0 \| + \int_0^{t_m} \| v_s(\phi^{a_0}_{s}) - v_s(\phi^{\hat{a}_0}_{s}) \| \diff s \\
    & \leq \| a_0 - \hat{a}_0 \| + M \int_0^{t_m} \| \phi^{a_0}_{s} - \phi^{\hat{a}_0}_{s} \| \diff s \leq F \|a_0 - \hat{a}_0 \|
\end{align*}
where we get the last line by Gronwall's lemma.
\end{proof}

\section{Additional Experiments}\label{appsec:exp}

\subsection{Adversarial Attacks}\label{appsubsec:exp-attacks}

White-box attacks have access to the network's weights and architecture. The Fast Gradient Method (FGM) \cite{fgm} takes a perturbation step in the direction of the gradient that maximizes the loss. Projected Gradient Descent (PGD) \cite{pgd} and the Basic Iterative Method (BIM) \cite{bim} are iterative versions of FGM. We use the Auto-PGD-CE \cite{auto} variant of PGD which has an adaptive step size. Two slower but more powerful attacks are DeepFool (DF) \cite{df}, which iteratively perturbs an input in the direction of the closest decision boundary, and Carlini-Wagner (CW) \cite{cw2}, which solves an optimization problem to find the perturbation. AutoAttack (AA) \cite{auto} is a combination of three white-box attacks (two variants of Auto-PGD \cite{auto} and the FAB attack of \cite{fab}), and of the black-box Square Attack (SA) \cite{sa}. Black-box attacks don't have any knowledge about the network and can only query it. We use two such attacks: Hop-Skip-Jump (HSJ) \cite{hsj}, which estimates the gradient direction at the decision boundary, and the Boundary Attack (BA) \cite{ba}, which starts from a large adversarial input and moves towards the boundary decision to minimize the perturbation. We use a maximal perturbation of $\epsilon {=} 0.03$ for FGM, APGD, BIM and AA. We use the $L_2$ norm for CW and HSJ and $L_\infty$ for the other attacks. We use ART \cite{art} and its default hyper-parameter values (except those mentioned) to generate the adversarial samples, except for AA for which we use the authors' original code. The number of iterations is 50 for HSJ, 5000 for BA, 10 for CW and 100 for APGD and DF.

\subsection{Implementation Details}\label{appsubsec:exp-implem}

For ResNeXt50 \cite{resnext} on CIFAR100 \cite{cifar}, we train for 300 epochs using SGD with a learning rate of $0.1$ (divided by ten at epochs 150, 225 and 250), Kaiming initialization, a batch size of 128 and weight decay of $0.0001$. For RCE training, the only changes are that the learning rate is $0.05$ and the initialization is orthogonal with a gain of $0.05$.

For ResNet110 \cite{resnet1} on CIFAR10 \cite{cifar}, we train for 300 epochs using SGD with a learning rate of $0.1$ (divided by ten at epochs 150, 225 and 250), orthogonal initialization with a gain of 0.05, a batch size of 256, weight decay of $0.0001$ and gradient clipping at 5. For RCE training, the only change is that we don't use gradient clipping.

For WideResNet \cite{wide} on TinyImageNet, we train for 300 epochs using SGD with a learning rate of $0.1$ (divided by ten at epochs 150, 225 and 250), orthogonal initialization with a gain of $0.1$, a batch size of $114$ and weight decay of $0.0001$.

For the magnitude parameter of the Mahalanobis detector, we try all the values tried in their paper for the magnitude and we report the best results.

\subsection{Adversarial detection training data}\label{appsubsec:exp-data}

See Figure \ref{fig:data}.

\begin{figure}[H]
\centering
\includegraphics[width=0.9\textwidth]{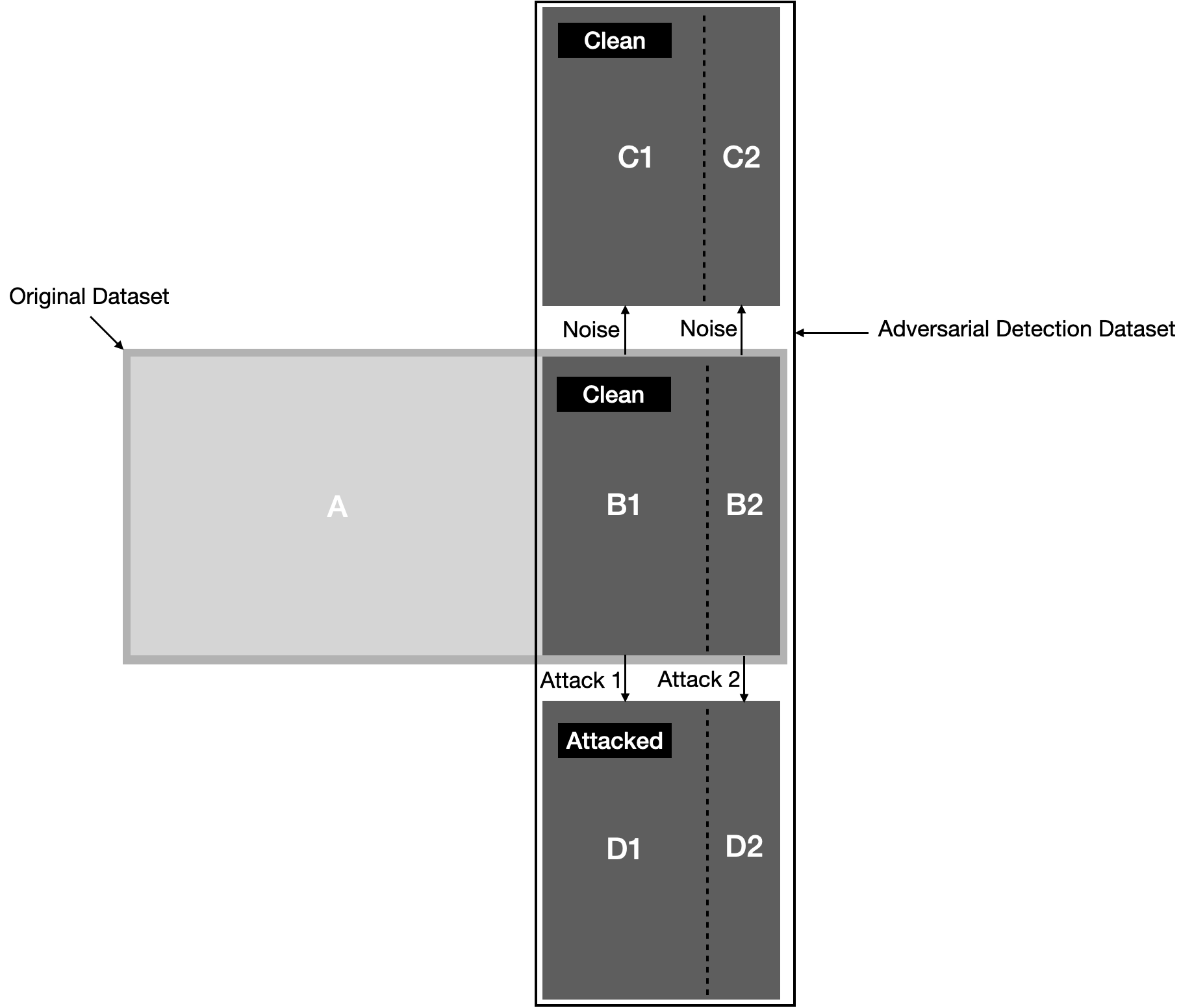}
\caption{Adversarial detection dataset creation. A$\cup$B1$\cup$B2 is the original dataset, where A is the training set and B1$\cup$B2 is the test set. We create a noisy version of B1$\cup$B2 by adding random noise to each sample in B1$\cup$B2 to get C1$\cup$C2. Noisy samples are considered clean (i.e. not attacked) in adversarial detection training. We create an attacked version of B1$\cup$B2 by creating an attacked image from each image in B1$\cup$B2 to get D1$\cup$D2. In the case of generalization to unseen attacks, Attack 2 used to create D2 from B2 is different from Attack 1 used to create D1 from B1. Otherwise, Attack 1 and Attack 2 are the same. B1$\cup$C1$\cup$D1 is the adversarial detection training set and B2$\cup$C2$\cup$D2 is the adversarial detection test set.}
\label{fig:data}
\end{figure}

\subsection{Preliminary Experiments}\label{appsubsec:exp-prel}

We see in Figure \ref{fig:batchnorm} below that training deep ResNets without batch-normalization is near impossible, whereas LAP-ResNets maintain the same performance and stability without ResNets for up to 50 blocks. LAP-ResNets are also compared in this regard to the small step method of \cite{small-step}, which simply adds a small weight $h$ of around $0.1$ in front of the residue function to make ResNets more stable. The Least Action Principle has the same improved stability when training without batch-normalization in Figure \ref{fig:batchnorm} as this method, while also improving the test accuracy when batch-normalization is used \cite{lap} which the small step method does not claim.

\begin{figure}[H]
\centering
\includegraphics[width=0.9\textwidth]{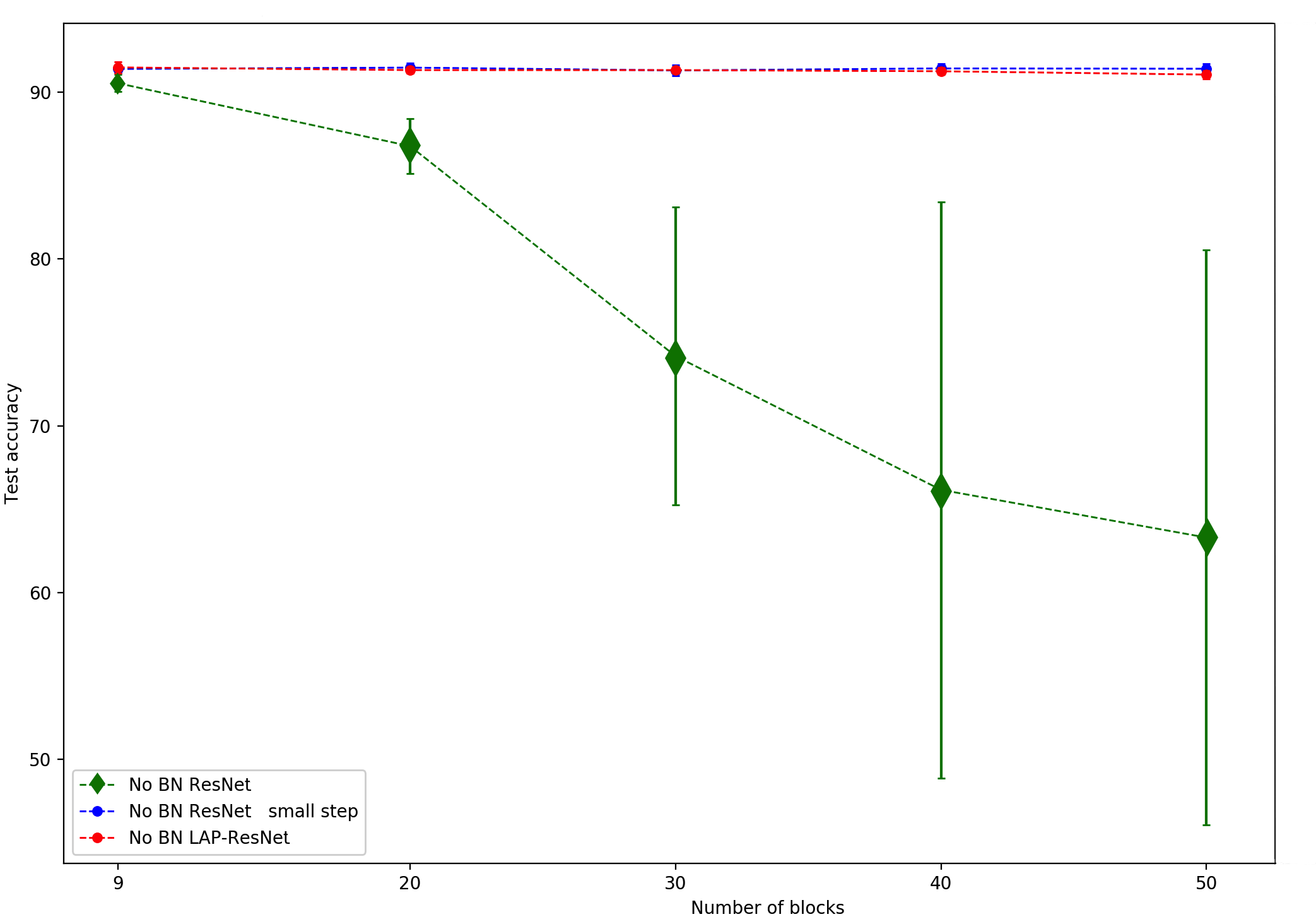}
\caption{Test accuracy of ResNets of various depths without batch-normalization on CIFAR10.}
\label{fig:batchnorm}
\end{figure}

\subsection{Detection of Seen Attacks}\label{appsubsec:exp-seen}

In the tables below, VAN corresponds to detectors trained on a vanilla network, RCE on an RCE-network and LAP on a LAP-network. 

\begin{table}[H] 
\footnotesize
\caption[Detection accuracy of seen attacks on ResNet110]{Average adversarial detection accuracy of seen attacks and standard deviation over 5 runs using ResNet110 on CIFAR10.}
\label{tab:resnet110-cifar10-seen-1}
\begin{center} 
\begin{tabular}{ccccc}
\toprule
\multicolumn{1}{c}{} & \multicolumn{4}{c}{Attack} \\
\cmidrule(r){2-5}
Det & FGM & APGD & BIM & DF  \\ 
\midrule
VAN T & 97.1 $\pm$ 0.6 & 94.1 $\pm$ 0.4 & 97.5 $\pm$ 0.5 & \textbf{100} $\pm$ 0.5  \\ 
RCE T & 96.0 $\pm$ 0.5 & 95.3 $\pm$ 0.8 & 96.2 $\pm$ 0.6 & 99.9 $\pm$ 0.1 \\ 
LAP T & \textbf{98.7} $\pm$ 0.3 & \textbf{97.5} $\pm$ 0.4 & \textbf{99.3} $\pm$ 0.3 & 99.8 $\pm$ 0.1  \\
\cdashlinelr{1-5}
VAN M & 87.8 $\pm$ 4.2 & 82.1 $\pm$ 4.0 & 86.8 $\pm$ 4.7 & 91.5 $\pm$ 3.2 \\  
RCE M & 93.0 $\pm$ 0.6 & 87.9 $\pm$ 0.5 & 92.3 $\pm$ 1.0 & 95.0 $\pm$ 0.4 \\
LAP M & 95.6 $\pm$ 0.6 & 90.7 $\pm$ 0.7 & 95.4 $\pm$ 0.5 & 96.7 $\pm$ 0.7 \\
\cdashlinelr{1-5}
VAN N & 94.6 $\pm$ 0.7 & 94.3 $\pm$ 0.5 & 95.0 $\pm$ 0.5 & 99.8 $\pm$ 0.1 \\
\bottomrule
\end{tabular}
\end{center}
\end{table}

\begin{table}[H] 
\footnotesize
\caption[Detection accuracy of seen attacks on ResNet110]{Average adversarial detection accuracy of seen attacks and standard deviation over 5 runs using ResNet110 on CIFAR10.}
\label{tab:resnet110-cifar10-seen-2}
\begin{center} 
\begin{tabular}{ccccc}
\toprule
\multicolumn{1}{c}{} & \multicolumn{4}{c}{Attack} \\
\cmidrule(r){2-5}
Det & CW & AA & HSJ & BA  \\ 
\midrule
VAN T & \textbf{98.0} $\pm$ 0.5 & 88.9 $\pm$ 1.3 & \textbf{99.9} $\pm$ 0.1 & 96.6 $\pm$ 0.6  \\ 
RCE T & 89.4 $\pm$ 0.5 & \\ 
LAP T & 98.0 $\pm$ 0.4 & \textbf{94.1} $\pm$ 0.8 & 99.9 $\pm$ 0.1 & \textbf{97.0} $\pm$ 0.2  \\
\cdashlinelr{1-5}
VAN M & 85.6 $\pm$ 2.6 & 80.5 $\pm$ 2.2 & 85.5 $\pm$ 1.8 & 80.2 $\pm$ 2.1 \\  
RCE M & 83.4 $\pm$ 0.5 & & & \\
LAP M & 93.4 $\pm$ 0.6 & 90.0 $\pm$ 0.9 & 94.6 $\pm$ 0.4 & 89.6 $\pm$ 0.4 \\
\cdashlinelr{1-5}
VAN N & 93.9 $\pm$ 9.2 & 88.8 $\pm$ 1.4 & 99.7 $\pm$ 0.1 & 92.1 $\pm$ 0.5  \\
\bottomrule
\end{tabular}
\end{center}
\end{table}

\begin{table}[H] 
\footnotesize
\caption[Detection accuracy of seen attacks on ResNeXt50]{Average adversarial detection accuracy of seen attacks and standard deviation over 5 runs using ResNeXt50 on CIFAR100.}
\label{tab:resnext50-cifar100-seen-1}
\begin{center} 
\begin{tabular}{ccccc}
\toprule
\multicolumn{1}{c}{} & \multicolumn{4}{c}{Attack} \\
\cmidrule(r){2-5}
Det & FGM & PGD & BIM & AA  \\ 
\midrule
VAN T & 97.3 $\pm$ 0.5 & 96.0 $\pm$ 0.5 & 98.0 $\pm$ 0.3 & 84.9 $\pm$ 0.7  \\ 
RCE T & 97.4 $\pm$ 0.4 & 97.0 $\pm$ 0.1 & 97.8 $\pm$ 0.2 & 50.1 $\pm$ 0.1   \\
LAP T & \textbf{98.3} $\pm$ 0.3 & \textbf{97.8} $\pm$ 0.5 & \textbf{98.9} $\pm$ 0.1 & \textbf{87.6} $\pm$ 0.6  \\ 
\cdashlinelr{1-5}
VAN M & 95.8 $\pm$ 0.5 & 93.9 $\pm$ 0.5 & 96.1 $\pm$ 0.6 & 83.9 $\pm$ 0.7 \\  
RCE M & 96.5 $\pm$ 0.4 & 94.7 $\pm$ 0.4 & 96.6 $\pm$ 0.6 & 50.1 $\pm$ 0.1 \\
LAP M & 96.8 $\pm$ 0.4 & 94.6 $\pm$ 0.7 & 97.8 $\pm$ 0.5 & 86.6 $\pm$ 0.5 \\
\cdashlinelr{1-5}
VAN N & 94.7 $\pm$ 0.7 & 94.2 $\pm$ 1.0 & 94.7 $\pm$ 0.6 & 84.8 $\pm$ 0.8 \\
\bottomrule
\end{tabular}
\end{center}
\end{table}

\begin{table}[H] 
\footnotesize
\caption[Detection accuracy of seen attacks on ResNeXt50]{Average adversarial detection accuracy of seen attacks and standard deviation over 5 runs using ResNeXt50 on CIFAR100.}
\label{tab:resnext50-cifar100-seen-2}
\begin{center}
\begin{tabular}{ccc}
\toprule
\multicolumn{1}{c}{} & \multicolumn{2}{c}{Attack} \\
\cmidrule(r){2-3}
Detector & DF & CW \\ 
\midrule
VAN TR & \textbf{99.80} $\pm$ 0.15 & \textbf{97.04} $\pm$ 0.88 \\ 
RCE TR & 99.04 $\pm$ 0.14 & 92.52 $\pm$ 0.34  \\
LAP TR & 99.58 $\pm$ 0.18 & 97.80 $\pm$ 0.18 \\ 
\cdashlinelr{1-3}
VAN MH & 97.30 $\pm$ 0.45 & 95.38 $\pm$ 0.56 \\  
RCE MH & 97.64 $\pm$ 0.44 & 88.36 $\pm$ 0.62 \\
LAP MH & 97.12 $\pm$ 0.28 & 96.42 $\pm$ 0.42 \\
\cdashlinelr{1-3}
VAN NS & 99.56 $\pm$ 0.21 & 90.72 $\pm$ 1.39 \\
\bottomrule
\end{tabular}
\end{center}
\end{table}

\begin{table}[H] 
\footnotesize
\caption[Detection accuracy of seen attacks on WideResNet]{Average adversarial detection accuracy of seen attacks and standard deviation over 5 runs using WideResNet on TinyImageNet.}
\label{tab:wideresnet-tinyimagenet-seen}
\begin{center}
\begin{tabular}{ccccc}
\toprule
\multicolumn{1}{c}{} & \multicolumn{4}{c}{Attack} \\
\cmidrule(r){2-5}
Det & FGM & APGD & BIM & AA \\ 
\midrule
VAN T & \textbf{95.4} $\pm$ 0.4 & \textbf{95.2} $\pm$ 0.5 & \textbf{95.3} $\pm$ 0.5 & \textbf{81.4} $\pm$ 0.4  \\ 
LAP T & 95.1 $\pm$ 0.5 & 95.2 $\pm$ 0.7 & 95.1 $\pm$ 0.7 & 81.2 $\pm$ 0.5 \\ 
\cdashlinelr{1-5}
VAN M & 81.1 $\pm$ 1.1 & 79.7 $\pm$ 1.0 & 81.2 $\pm$ 1.3 & 78.4 $\pm$ 0.7 \\  
LAP M & 85.3 $\pm$ 1.0 & 85.1 $\pm$ 0.6 & 82.5 $\pm$ 1.6 & 78.4 $\pm$ 1.0 \\
\cdashlinelr{1-5}
VAN N & 94.9 $\pm$ 0.7 & 94.9 $\pm$ 0.9 & 95.0 $\pm$ 0.6 & 81.3 $\pm$ 0.2 \\
\bottomrule
\end{tabular}
\end{center}
\end{table}

\subsection{Detection of Unseen Attacks}\label{appsubsec:exp-unseen}

\begin{table}[H]
\footnotesize
\caption[Detection accuracy of unseen attacks on ResNet110]{Average adversarial detection accuracy of unseen attacks after training on FGM and standard deviation over 5 runs using ResNet110 on CIFAR10.}
\label{tab:resnet110-cifar10-unseen-1}
\begin{center}
\begin{tabular}{ccccc}
\toprule
\multicolumn{1}{c}{} & \multicolumn{4}{c}{Attack} \\
\cmidrule(r){2-5}
Detector & APGD & BIM & AA & DF \\ 
\midrule
VAN TR & 89.3 $\pm$ 1.6 & 96.0 $\pm$ 0.7 & \textbf{85.1} $\pm$ 1.1 & \textbf{91.0} $\pm$ 0.9  \\ 
RCE TR & 91.8 $\pm$ 1.1 & 93.6 $\pm$ 1.1 &  50.0 $\pm$ 0.1 & 63.4 $\pm$ 1.1     \\ 
LAP TR & \textbf{92.8} $\pm$ 0.5 & \textbf{98.8} $\pm$ 0.4 & 84.2 $\pm$ 0.5 & 75.5 $\pm$ 1.2  \\ 
\cdashlinelr{1-5}
VAN MH & 77.3 $\pm$ 4.7 & 77.2 $\pm$ 4.8 & 72.1 $\pm$ 3.1 & 80.1 $\pm$ 3.4 \\ 
RCE MH & 81.5 $\pm$ 0.6 & 82.6 $\pm$ 1.3 & 50.0 $\pm$ 0.1 & 81.2 $\pm$ 0.7  \\ 
LAP MH & 87.9 $\pm$ 0.8 & 84.9 $\pm$ 0.4 & 81.9 $\pm$ 1.2 & 81.6 $\pm$ 0.7  \\
\cdashlinelr{1-5}
VAN NS & 92.1 $\pm$ 0.5 & 93.9 $\pm$ 0.4 & 51.8 $\pm$ 0.6 & 51.4 $\pm$ 0.58  \\
\bottomrule
\end{tabular}
\end{center}
\end{table}

\begin{table}[H]
\footnotesize
\caption[Detection accuracy of unseen attacks on ResNet110]{Average adversarial detection accuracy of unseen attacks after training on FGM and standard deviation over 5 runs using ResNet110 on CIFAR10.}
\label{tab:resnet110-cifar10-unseen-2}
\begin{center}
\begin{tabular}{cccc}
\toprule
\multicolumn{1}{c}{} & \multicolumn{3}{c}{Attack} \\
\cmidrule(r){2-4}
Detector & CW &  HSJ & BA \\ 
\midrule
VAN TR & \textbf{93.2} $\pm$ 1.0 & \textbf{93.0} $\pm$ 0.9 & \textbf{90.9} $\pm$ 0.6  \\ 
RCE TR & 60.5 $\pm$ 0.9 & 63.9 $\pm$ 1.0 & 52.5 $\pm$ 0.5    \\ 
LAP TR & 75.2 $\pm$ 1.0 &  76.8 $\pm$ 0.6 & 75.0 $\pm$ 0.4 \\ 
\cdashlinelr{1-4}
VAN MH & 79.9 $\pm$ 3.7 &  79.7 $\pm$ 3.0 & 79.3 $\pm$ 3.0 \\ 
RCE MH & 76.0 $\pm$ 0.9 &  81.6 $\pm$ 0.9 & 68.5 $\pm$ 1.2 \\ 
LAP MH &  81.5 $\pm$ 0.6 &  81.5 $\pm$ 0.3 & 81.4 $\pm$ 0.8  \\
\cdashlinelr{1-4}
VAN NS & 50.84 $\pm$ 1.1 & 52.1 $\pm$ 0.7 & 59.9 $\pm$ 5.4 \\
\bottomrule
\end{tabular}
\end{center}
\end{table}

\begin{table}[H]
\footnotesize
\caption[Detection accuracy of unseen attacks on ResNeXt50]{Average adversarial detection accuracy of unseen attacks after training on FGM and standard deviation over 5 runs using ResNeXt50 on CIFAR100.}
\label{tab:resnext50-cifar100-unseen-1}
\begin{center} 
\begin{tabular}{ccccc}
\toprule
\multicolumn{1}{c}{} & \multicolumn{4}{c}{Attack} \\
\cmidrule(r){2-5}
Detector & APGD & BIM & AA & DF  \\ 
\midrule
VAN T & 91.9 $\pm$ 0.8 & 95.0 $\pm$ 0.5 & 73.3 $\pm$ 1.0 & \textbf{85.2} $\pm$ 0.6  \\ 
RCE T & 87.7 $\pm$ 0.5 & 95.1 $\pm$ 0.9 & 50.0 $\pm$ 0.1 & 72.3 $\pm$ 0.4    \\ 
LAP T & 89.3 $\pm$ 0.8 & \textbf{97.7} $\pm$ 0.3 & 74.0 $\pm$ 1.3 & 76.0 $\pm$ 1.0  \\ 
\cdashlinelr{1-5}
VAN M & 90.9 $\pm$ 0.8 & 93.2 $\pm$ 0.3 & 73.1 $\pm$ 0.6 & 82.7 $\pm$ 0.9  \\ 
RCE M & 82.0 $\pm$ 0.7 & 88.6 $\pm$ 0.8 & 50.0 $\pm$ 0.1 & 74.1 $\pm$ 0.8  \\ 
LAP M & 86.7 $\pm$ 0.9 & 93.9 $\pm$ 0.4 & \textbf{80.0} $\pm$ 0.6 & 79.4 $\pm$ 1.6 \\
\cdashlinelr{1-5}
VAN NS & \textbf{92.2} $\pm$ 0.4 & 93.9 $\pm$ 1.0 & 51.3 $\pm$ 0.4 & 51.6 $\pm$ 0.5 \\
\bottomrule
\end{tabular}
\end{center}
\end{table}

\begin{table}[H]
\footnotesize
\caption[Detection accuracy of unseen attacks on ResNeXt50]{Average adversarial detection accuracy of unseen attacks after training on FGM and standard deviation over 5 runs using ResNeXt50 on CIFAR100.}
\label{tab:resnext50-cifar100-unseen-2}
\begin{center}
\begin{tabular}{cccc}
\toprule
\multicolumn{1}{c}{} & \multicolumn{3}{c}{Attack} \\
\cmidrule(r){2-4}
Detector & CW &  HSJ & BA \\ 
\midrule
VAN T & \textbf{78.2} $\pm$ 1.0 & \textbf{85.0} $\pm$ 0.4 & \textbf{92.1} $\pm$ 4.8 \\ 
RCE T & 61.9 $\pm$ 0.5 & 72.4 $\pm$ 0.4 & 57.7 $\pm$ 0.4    \\ 
LAP T & 74.7 $\pm$ 1.1 &  78.1 $\pm$ 3.4 & 71.9 $\pm$ 3.9 \\ 
\cdashlinelr{1-4}
VAN M & 76.4 $\pm$ 0.7 & 82.8 $\pm$ 1.2 & 84.5 $\pm$ 2.0 \\ 
RCE M & 63.0 $\pm$ 0.6 & 74.6 $\pm$ 0.3 & 63.2 $\pm$ 0.9 \\ 
LAP M & 80.9 $\pm$ 2.0 &  80.5 $\pm$ 3.7 & 78.2 $\pm$ 2.0 \\
\cdashlinelr{1-4}
VAN NS & 51.0 $\pm$ 0.4 & 52.0 $\pm$ 0.7 & 57.9 $\pm$ 7.5 \\
\bottomrule
\end{tabular}
\end{center}
\end{table}

\begin{table}[H]
\footnotesize
\caption[Detection accuracy of unseen attacks on WideResNet]{Average adversarial detection accuracy of unseen attacks after training on FGM and standard deviation over 5 runs using WideResNet on TinyImageNet.}
\label{tab:wideresnet-tinyimagenet-unseen-1}
\begin{center}
\begin{tabular}{cccc}
\toprule
\multicolumn{1}{c}{} & \multicolumn{3}{c}{Attack} \\
\cmidrule(r){2-4}
Detector & APGD & BIM & AA    \\ 
\midrule
VAN TR & 93.26 $\pm$ 0.60 & 94.66 $\pm$ 0.49  & \textbf{77.04} $\pm$ 0.74  \\ 
LAP TR & 93.48 $\pm$ 0.72 & \textbf{94.80} $\pm$ 0.56 & 76.58 $\pm$ 0.48  \\ 
\cdashlinelr{1-4}
VAN MH & 76.96 $\pm$ 0.94 & 77.02 $\pm$ 1.08  & 60.36 $\pm$ 0.62   \\ 
LAP MH & 77.96 $\pm$ 0.49 & 78.00 $\pm$ 0.77 & 61.96 $\pm$ 0.89 \\
\cdashlinelr{1-4}
VAN NS & \textbf{94.06} $\pm$ 0.61 & 94.62 $\pm$ 0.64 & 72.82 $\pm$ 1.98  \\
\bottomrule
\end{tabular}
\end{center}
\end{table}

\begin{table}[H]
\footnotesize
\caption[Detection accuracy of unseen attacks on WideResNet]{Average adversarial detection accuracy of unseen attacks after training on FGM and standard deviation over 5 runs using WideResNet on TinyImageNet.}
\label{tab:wideresnet-tinyimagenet-unseen-2}
\begin{center}
\begin{tabular}{ccc}
\toprule
\multicolumn{1}{c}{} & \multicolumn{2}{c}{Attack} \\
\cmidrule(r){2-3}
Detector & DF & CW   \\ 
\midrule
VAN TR & \textbf{90.62} $\pm$ 0.60 & 91.42 $\pm$ 1.06 \\ 
LAP TR & 90.12 $\pm$ 0.55 & \textbf{91.52} $\pm$ 0.89  \\ 
\cdashlinelr{1-3}
VAN MH & 73.18 $\pm$ 0.59 & 75.52 $\pm$ 0.82 \\ 
LAP MH & 73.98 $\pm$ 1.12 & 76.22 $\pm$ 0.83   \\
\cdashlinelr{1-3}
VAN NS & 71.96 $\pm$ 4.03 & 65.60 $\pm$ 2.20 \\
\bottomrule
\end{tabular}
\end{center}
\end{table}

\subsection{Detection Rate of Successful Adversarial Samples}\label{appsubsec:exp-tpr}

As in \cite{mahalanobis}, we might be only concerned with detecting adversarial samples that successfully fool the network and that are created from clean samples that are correctly classified. We find that the detection rate of successful adversarial samples is always very high and close to $100\%$ on our detector. On seen attacks, the results are in Table \ref{tab:seen-attacks-tpr}. On unseen attacks, the results are in Table \ref{tab:unseen-attacks-tpr}.

\begin{table}[H]
\footnotesize
\caption{Average detection rate of successful adversarial samples from seen attacks over 5 runs on Network/LAP-Network.}
\label{tab:seen-attacks-tpr}
\begin{center}
\begin{tabular}{@{}ccccc@{}}
\toprule
Attack & Detector & \begin{tabular}[c]{@{}c@{}}ResNet110\\ CIFAR10\end{tabular} & \begin{tabular}[c]{@{}c@{}}ResNeXt50\\ CIFAR100\end{tabular} & \begin{tabular}[c]{@{}c@{}}WideResNet\\ TinyImageNet\end{tabular} \\ \midrule
\multirow{3}{*}{FGM} & TR & 97.7/\textbf{98.6} & 98.2/\textbf{98.6} & 95.4/\textbf{95.9}  \\
 & MH & 88.3/93.9 &  96.7/97.1 & 84.0/85.0 \\
 & NS & 95.9  & 95.0 & 94.4 \\ \midrule
\multirow{3}{*}{APGD} & TR & 99.3/\textbf{99.4} & 97.1/\textbf{97.9} & \textbf{96.7}/96.4 \\
 & MH & 85.9/86.8 & 95.8/92.7 & 82.7/84.8 \\
 & NS & 95.9 & 94.8 & 94.5 \\ \midrule
\multirow{3}{*}{BIM} & TR & 98.3/\textbf{99.6} & 98.6/\textbf{99.2} &  95.0/\textbf{96.1} \\
 & MH & 88.1/93.8 & 96.6/98.0 & 85.8/86.2 \\
 & NS & 96.4 & 94.6 & 94.3 \\ \midrule
\multirow{3}{*}{AA} & TR & \textbf{100}/\textbf{100} & \textbf{100}/\textbf{100} &  \textbf{100}/\textbf{100} \\
 & MH & 88.3/95.4 & 98.8/98.7 &  95.6/96.5 \\
 & NS & 99.9 & 99.9 & 99.9 \\ \midrule
\multirow{3}{*}{DF} & TR & \textbf{100}/99.9 & \textbf{99.9}/99.4 &  \\
 & MH & 93.8/98.2 & 97.6/97.8 &  \\
 & NS & 99.9 & 99.3 &  \\ \midrule
\multirow{3}{*}{CW} & TR & 98.6/98.7 & 99.9/99.6 &  \\
 & MH & 83.5/93.9 & 98.1/98.1 &  \\
 & NS & \textbf{99.9} & \textbf{100} &  \\ \midrule
\multirow{3}{*}{HSJ} & TR & \textbf{100}/99.9 &  &  \\
 & MH & 82.3/95.1 &  &  \\
 & NS & 99.7 &  &   \\ \bottomrule
\end{tabular}
\end{center}
\end{table}

\begin{table}[H]
\footnotesize
\caption{Average detection rate of successful adversarial samples from unseen attacks after training on FGM over 5 runs.}
\label{tab:unseen-attacks-tpr}
\begin{center}
\begin{tabular}{@{}ccccc@{}}
\toprule
Attack & Detector & \begin{tabular}[c]{@{}c@{}}ResNet110\\ CIFAR10\end{tabular} & \begin{tabular}[c]{@{}c@{}}ResNeXt50\\ CIFAR100\end{tabular} & \begin{tabular}[c]{@{}c@{}}WideResNet\\ TinyImageNet\end{tabular} \\ \midrule
\multirow{3}{*}{APGD} & TR & \textbf{96.94} & \textbf{98.76} & \textbf{100.0} \\
 & MH & 79.80 & 90.86 & 79.16 \\
 & NS & 93.02 & 91.8 & 93.20 \\ \midrule
\multirow{3}{*}{BIM} & TR & \textbf{98.68} & \textbf{98.56} & \textbf{100.0} \\
 & MH & 78.66 & 93.38 & 86.42 \\
 & NS & 94.24 & 93.8 & 93.98 \\ \midrule
\multirow{3}{*}{AA} & TR & \textbf{98.54} & \textbf{74.06} & \textbf{91.06} \\
 & MH & 81.10 & 73.74 & 71.70 \\
 & NS & 10.22 & 8.32 & 54.5  \\ \midrule
\multirow{3}{*}{DF} & TR & \textbf{93.42} & \textbf{75.14} & \textbf{95.00} \\
 & MH & 79.96 & 73.64 & 68.34 \\
 & NS & 8.12 & 8.06 & 50.80  \\ \midrule
\multirow{3}{*}{CW} & TR & \textbf{92.22} & \textbf{72.90} & \textbf{96.00} \\
 & MH & 78.96 & 72.34 & 76.66 \\
 & NS & 8.76 & 7.38 & 51.96  \\ \midrule
\multirow{3}{*}{HSJ} & TR & \textbf{93.22} & \textbf{75.42} &  \\
 & MH & 78.14 & 73.66 &  \\
 & NS & 9.60 & 8.94 &  \\ \midrule
\multirow{3}{*}{BA} & TR & \textbf{93.38} & \textbf{91.04} &  \\
 & MH & 79.42 &  76.44 &  \\
 & NS & 25.50 & 21.90 &  \\ \bottomrule
\end{tabular}
\end{center}
\end{table}

\subsection{False Positive Rate}\label{appsubsec:exp-fpr}

We report here the false positive rate on seen (Table \ref{tab:seen-attacks-fpr})  and unseen (Table \ref{tab:unseen-attacks-fpr}) attacks of both detectors.

\begin{table}[H]
\footnotesize
\caption{Average FPR of seen attacks over 5 runs on Network/LAP-Network.}
\label{tab:seen-attacks-fpr}
\centering
\begin{tabular}{@{}ccccc@{}}
\toprule
Attack & Detector & \begin{tabular}[c]{@{}c@{}}ResNet110\\ CIFAR10\end{tabular} & \begin{tabular}[c]{@{}c@{}}ResNeXt50\\ CIFAR100\end{tabular} & \begin{tabular}[c]{@{}c@{}}WideResNet\\ TinyImageNet\end{tabular} \\ \midrule
\multirow{3}{*}{FGM} & TR & 3.3/\textbf{1.5} & 3.4/\textbf{1.9} & \textbf{3.7}/5.2  \\
 & MH & 13.9/3.5 & 5.2/3.3 & 18.1/16.3 \\
 & NS & 5.4 & 5.0 &  4.5 \\ \midrule
\multirow{3}{*}{APGD} & TR & 6.3/\textbf{2} & 4.6/\textbf{2.3} & 5.3/4.9 \\
 & MH & 18.7/4.7 & 6.3/3.6 & 17.5/16.9 \\
 & NS & 4.9 & 5.0 & \textbf{4.5} \\ \midrule
\multirow{3}{*}{BIM} & TR & 2.7/\textbf{0.8} & 2.5/\textbf{1.9} & \textbf{3.6}/4.6 \\
 & MH & 13.6/3.1 & 4.6/3.3 & 18.0/15.9 \\
 & NS & 4.8 & 4.8 & 4.2  \\ \midrule
\multirow{3}{*}{AA} & TR & 1.9/\textbf{1.5} & 4.0/\textbf{4.1} & \textbf{6.8}/7.0  \\
 & MH & 13.3/6.4 & 13.7/10.8 & 14.8/15.3 \\
 & NS & 2.9 & 5.1 & 7.4  \\ \midrule
\multirow{3}{*}{DF} & TR & \textbf{0.1}/0.2 & \textbf{0.2}/0.3 &  \\
 & MH & 10.2/4.6 & 2.9/2.7 &  \\
 & NS & 0.4 & \textbf{0.2} &  \\ \midrule
\multirow{3}{*}{CW} & TR & 2.6/2.8 & \textbf{0.3}/0.4 &  \\
 & MH & 12.4/7.4 & 2.5/2.2 &  \\
 & NS & \textbf{2.3} & 2.8 &  \\ \midrule
\multirow{3}{*}{HSJ} & TR & \textbf{0.1}/\textbf{0.1} &  &  \\
 & MH & 11.9/5.8 &  &  \\
 & NS & 0.3 &  &   \\ \bottomrule
\end{tabular}
\end{table}

\begin{table}[H]
\footnotesize
\caption{Average FPR of unseen attacks after training on FGM over 5 runs.}
\label{tab:unseen-attacks-fpr}
\begin{center}
\begin{tabular}{@{}ccccc@{}}
\toprule
Attack & Detector & \begin{tabular}[c]{@{}c@{}}ResNet110\\ CIFAR10\end{tabular} & \begin{tabular}[c]{@{}c@{}}ResNeXt50\\ CIFAR100\end{tabular} & \begin{tabular}[c]{@{}c@{}}WideResNet\\ TinyImageNet\end{tabular} \\ \midrule
\multirow{2}{*}{APGD} & TR & \textbf{6.70} & 8.96 & \textbf{9.44} \\
 & MH & 21.36 & \textbf{7.28} & 14.48 \\
 \midrule
\multirow{2}{*}{BIM} & TR & \textbf{6.02} & 7.10 & \textbf{9.44} \\
 & MH & 21.46 & \textbf{7.08} & 14.54 \\
  \midrule
\multirow{2}{*}{AA} & TR & \textbf{8.06} & \textbf{7.82} & \textbf{9.48} \\
 & MH & 23.36 & 7.90 & 14.48 \\
  \midrule
\multirow{2}{*}{DF} & TR & \textbf{6.34} & \textbf{3.74} & \textbf{9.44} \\
 & MH & 18.48 & 7.78 & 14.54 \\
 \midrule
\multirow{2}{*}{CW} & TR & \textbf{5.64} & \textbf{3.74} & \textbf{9.44} \\
 & MH & 18.86 & 7.68 & 14.82 \\
  \midrule
\multirow{2}{*}{HSJ} & TR & \textbf{6.70} & \textbf{4.86} &  \\
 & MH & 18.72 & 7.56 &  \\
 \midrule
\multirow{2}{*}{BA} & TR & \textbf{6.70} & \textbf{5.74} &  \\
 & MH & 19.10 & 7.90 &  \\
  \bottomrule
\end{tabular}
\end{center}
\end{table}

\subsection{AUROC}\label{appsubsec:exp-auroc}

We report in Table \ref{tab:seen-attacks-auroc} the AUROC of seen attacks, and in Table \ref{tab:unseen-attacks-auroc} the AUROC of unseen attacks. Note that the AUROC is computed on the class-agnostic random forest detector, not on the ensemble of the class-agnostic and the class-conditional detectors.

\begin{table}[H]
\footnotesize
\caption{Average AUROC of seen attacks on Network/LAP-Network.}
\label{tab:seen-attacks-auroc}
\begin{center}
\begin{tabular}{@{}ccccc@{}}
\toprule
Attack & Detector & \begin{tabular}[c]{@{}c@{}}ResNet110\\ CIFAR10\end{tabular} & \begin{tabular}[c]{@{}c@{}}ResNeXt50\\ CIFAR100\end{tabular} & \begin{tabular}[c]{@{}c@{}}WideResNet\\ TinyImageNet\end{tabular} \\ \midrule
\multirow{2}{*}{AA} & TR & 94.91/\textbf{99.95} & \textbf{99.86}/99.70 & 82.58/\textbf{82.95}  \\
 & MH & 81.94/94.17 & 87.13/94.32 & 70.85/71.36  \\
 \midrule
\multirow{2}{*}{DF} & TR & \textbf{99.94}/99.92 & \textbf{99.84}/99.58 &  \\
 & MH & 89.60/94.17 & 86.88/91.82 &  \\
 \midrule
\multirow{2}{*}{CW} & TR & 98.77/\textbf{99.96} & \textbf{99.85}/99.61 &  \\
 & MH & 88.33/94.31 & 86.33/91.36 &  \\
\midrule
\multirow{2}{*}{HSJ} & TR & \textbf{99.95}/99.94 &  &  \\
 & MH & 86.01/93.45 &  &  \\
\bottomrule
\end{tabular}
\end{center}
\end{table}

\begin{table}[H]
\footnotesize
\caption{Average AUROC of unseen attacks after training on FGM over 5 runs.}
\label{tab:unseen-attacks-auroc}
\begin{center}
\begin{tabular}{@{}ccccc@{}}
\toprule
Attack & Detector & \begin{tabular}[c]{@{}c@{}}ResNet110\\ CIFAR10\end{tabular} & \begin{tabular}[c]{@{}c@{}}ResNeXt50\\ CIFAR100\end{tabular} & \begin{tabular}[c]{@{}c@{}}WideResNet\\ TinyImageNet\end{tabular} \\ \midrule
\multirow{2}{*}{AA} & TR &  \textbf{82.70} & \textbf{71.53} & \textbf{9.48} \\
 & MH & 76.46 & 66.60 & 61.61 \\
 \midrule
\multirow{2}{*}{DF} & TR & \textbf{90.09} & \textbf{72.61} & \textbf{9.44} \\
 & MH & 82.25 & 66.00 & 71.72 \\
 \midrule
\multirow{2}{*}{CW} & TR & \textbf{88.84} & \textbf{71.89} & \textbf{9.44} \\
 & MH & 81.59 & 66.06 & 71.57 \\
 \midrule
\multirow{2}{*}{HSJ} & TR & \textbf{87.30} & \textbf{74.85}  &  \\
 & MH & 75.89 & 67.07 &  \\
 \bottomrule
\end{tabular}
\end{center}
\end{table}

\subsection{Time Comparison}\label{appsubsec:exp-time}

With a ResNeXt50 on CIFAR100 and a Tesla V100 GPU, it takes our method (including the time to generate FGM attacks) 66 seconds to extract its features from both the clean and the adversarial samples, while it takes the Mahalanobis method 110 seconds. Mahalanobis also extracts some statistics from the training set prior to adversarial detection training, which takes an additional 35 seconds. Our feature vector is of size 32, compared to 5 for the Mahalanobis detector. So our random forest takes only 4 more seconds to train than the Mahalanobis one (7 vs 3 seconds). Computation of the features our detector uses (norms and cosines) is in $O(MD)$, where $M$ is the number of residual blocks and $D$ is the largest embedding dimension inside the network.

\subsection{Detection of Out-Of-Distribution Samples}\label{appsubsec:exp-ood}

Since our analysis applies to all out-of-distribution (OOD) samples, we test detection of OOD samples in a similar setting to the Mahalanobis paper \cite{mahalanobis}. We use the same ResNet110 and ResNeXt50 models trained on CIFAR10 and CIFAR100 respectively. Since the detectors need to be trained, we are in the OOD setting where we have a first dataset for training the network (CIFAR10 in Tables \ref{tab:ood-resnet110-accuracy-1} and \ref{tab:ood-resnet110-accuracy-2} and CIFAR100 in Tables \ref{tab:ood-resnext50-accuracy-1} and \ref{tab:ood-resnext50-accuracy-2}) and a second dataset from another distribution that is not the test OOD distribution to train the detector on. This could be another dataset (CIFAR100 in Table \ref{tab:ood-resnet110-accuracy-1}), some images found in the wild, or a perturbation of our dataset that we generate using an adversarial attack (CW on CIFAR10 in Table \ref{tab:ood-resnet110-accuracy-2}, and AA and CW on CIFAR100 in Tables \ref{tab:ood-resnext50-accuracy-1} and \ref{tab:ood-resnext50-accuracy-2} respectively). Detectors can then be used by training them to distinguish between these first two datasets, and then testing them on distinguishing between the first dataset and a third unseen dataset (SVHN \cite{svhn} in both tables). The accuracy is in Tables \ref{tab:ood-resnet110-accuracy-1} to \ref{tab:ood-resnext50-accuracy-2}. The AUROC is in Tables \ref{tab:ood-resnet110-auroc-1} to \ref{tab:ood-resnext50-auroc-2}. The false positive rate (FPR) at a fixed true positive rate (TPR) of $95\%$ is in Tables \ref{tab:ood-resnet110-fpr95tpr-1} to \ref{tab:ood-resnext50-fpr95tpr-2}.
Our detector performs very well and better than the MH detector in three of the four experiments, and in the fourth case, the MH detector benefits from LAP training by 8 percentage points (Table \ref{tab:ood-resnext50-accuracy-1}). Without any extra data available, using the CW adversarial attack allows to detect OOD samples from an unseen distribution with more than $90\%$ accuracy and an FPR of less than $10\%$ at a fixed TPR of $95\%$. The choice of the attack is also important, as CW allows for much better detection of unseen samples from SVHN than AA.

\begin{table}[H] 
\footnotesize
\caption{Average OOD detection accuracy and standard deviation over 5 runs using ResNet110 trained on CIFAR10.}
\label{tab:ood-resnet110-accuracy-1}
\begin{center}
\begin{tabular}{cccc}
\toprule
Detector & CIFAR100 (seen) & SVHN (unseen)  \\ 
\midrule
VAN TR & 98.30 $\pm$ 0.46 & 97.46 $\pm$ 0.49  \\ 
RCE TR & \textbf{98.42} $\pm$ 0.40 &  98.20 $\pm$ 0.39  \\ 
LAP TR & 98.30 $\pm$ 0.22 & \textbf{98.50} $\pm$ 0.47  \\
\cdashlinelr{1-3}
VAN MH & 86.88 $\pm$ 1.52 & 91.28 $\pm$ 0.92  \\  
RCE MH & 94.82 $\pm$ 0.45 & 92.16 $\pm$ 0.57  \\
LAP MH & 94.84 $\pm$ 0.41 & 90.46 $\pm$ 1.45  \\
\bottomrule
\end{tabular}
\end{center}
\end{table}

\begin{table}[H] 
\footnotesize
\caption{Average OOD detection accuracy and standard deviation over 5 runs using ResNet110 trained on CIFAR10.}
\label{tab:ood-resnet110-accuracy-2}
\begin{center}
\begin{tabular}{cccc}
\toprule
Detector & CW-CIFAR10 (seen) & SVHN (unseen) \\ 
\midrule
VAN TR & \textbf{97.42} $\pm$ 0.57 & \textbf{91.38} $\pm$ 0.95   \\ 
RCE TR & 91.54 $\pm$ 6.06  & 77.58 $\pm$ 6.72 \\ 
LAP TR & 97.28 $\pm$ 0.62  & 85.46 $\pm$ 2.64 \\
\cdashlinelr{1-3}
VAN MH & 81.80 $\pm$ 1.96  & 83.76 $\pm$ 1.13    \\  
RCE MH & 76.74 $\pm$ 2.75  & 54.24 $\pm$ 3.46 \\
LAP MH & 89.68 $\pm$ 0.65  & 76.72 $\pm$ 1.73 \\
\bottomrule
\end{tabular}
\end{center}
\end{table}

\begin{table}[H] 
\footnotesize
\caption{Average OOD detection AUROC and standard deviation over 5 runs using ResNet110 trained on CIFAR10.}
\label{tab:ood-resnet110-auroc-1}
\begin{center}
\begin{tabular}{cccc}
\toprule
Detector & CIFAR100 (seen) & SVHN (unseen)  \\ 
\midrule
VAN TR & 99.64 $\pm$ 0.13 & 98.74 $\pm$ 0.47 \\ 
RCE TR & 99.61 $\pm$ 0.09 & 99.01 $\pm$ 0.25  \\ 
LAP TR & \textbf{99.73} $\pm$ 0.09 & \textbf{99.43} $\pm$ 0.28  \\
\cdashlinelr{1-3}
VAN MH & 92.74 $\pm$ 1.50 & 97.00 $\pm$ 0.80  \\  
RCE MH & 97.97 $\pm$ 0.25 & 96.26 $\pm$ 0.40  \\
LAP MH & 98.06 $\pm$ 0.38 & 96.31 $\pm$ 0.75 \\
\bottomrule
\end{tabular}
\end{center}
\end{table}

\begin{table}[H] 
\footnotesize
\caption{Average OOD detection AUROC and standard deviation over 5 runs using ResNet110 trained on CIFAR10.}
\label{tab:ood-resnet110-auroc-2}
\begin{center}
\begin{tabular}{cccc}
\toprule
Detector & CW-CIFAR10 (seen) & SVHN (unseen) \\ 
\midrule
VAN TR & \textbf{99.32} $\pm$ 0.14 & \textbf{96.29} $\pm$ 0.71   \\ 
RCE TR & 96.53 $\pm$ 0.58 & 86.34 $\pm$ 3.98 \\ 
LAP TR & 99.31 $\pm$ 0.07 & 96.12 $\pm$ 0.59 \\
\cdashlinelr{1-3}
VAN MH & 88.38 $\pm$ 2.94 & 88.04 $\pm$ 4.03 \\  
RCE MH & 88.28 $\pm$ 2.22 & 79.53 $\pm$ 3.78 \\
LAP MH & 95.22 $\pm$ 0.90 & 86.54 $\pm$ 3.60 \\
\bottomrule
\end{tabular}
\end{center}
\end{table}

\begin{table}[H] 
\footnotesize
\caption{Average OOD detection FPR at 95$\%$ TPR and standard deviation over 5 runs using ResNet110 trained on CIFAR10.}
\label{tab:ood-resnet110-fpr95tpr-1}
\begin{center}
\begin{tabular}{cccc}
\toprule
Detector & CIFAR100 (seen) & SVHN (unseen) \\ 
\midrule
VAN TR & 1.16 $\pm$ 0.66 & 2.68 $\pm$ 1.05  \\ 
RCE TR & 1.16 $\pm$ 0.41 & 1.94 $\pm$ 0.86 \\ 
LAP TR & \textbf{1.02} $\pm$ 0.44 & \textbf{1.54} $\pm$ 0.56  \\
\cdashlinelr{1-3}
VAN MH & 36.42 $\pm$ 4.29 & 15.66 $\pm$ 1.69  \\  
RCE MH & 7.34 $\pm$ 0.90 & 20.30 $\pm$ 4.45  \\
LAP MH & 6.98 $\pm$ 2.35 & 17.56 $\pm$ 5.76 \\
\bottomrule
\end{tabular}
\end{center}
\end{table}

\begin{table}[H] 
\footnotesize
\caption{Average OOD detection FPR at 95$\%$ TPR and standard deviation over 5 runs using ResNet110 trained on CIFAR10.}
\label{tab:ood-resnet110-fpr95tpr-2}
\begin{center}
\begin{tabular}{cccc}
\toprule
Detector & CW-CIFAR10 (seen) & SVHN (unseen) \\ 
\midrule
VAN TR & 2.71 $\pm$ 1.01 & 6.68 $\pm$ 0.98   \\ 
RCE TR & 19.2 $\pm$ 2.35 & 24.98 $\pm$ 5.43 \\ 
LAP TR & \textbf{2.70} $\pm$ 0.64 & \textbf{5.68} $\pm$ 1.12 \\
\cdashlinelr{1-3}
VAN MH & 49.46 $\pm$ 4.80 & 36.78 $\pm$ 6.97 \\  
RCE MH & 41.94 $\pm$ 6.44 & 52.72 $\pm$ 7.71 \\
LAP MH & 23.06 $\pm$ 6.29 & 58.40 $\pm$ 14.42 \\
\bottomrule
\end{tabular}
\end{center}
\end{table}

\begin{table}[H] 
\footnotesize
\caption{Average OOD detection accuracy and standard deviation over 5 runs using ResNeXt50 trained on CIFAR100.}
\label{tab:ood-resnext50-accuracy-1}
\begin{center}
\begin{tabular}{cccc}
\toprule
Detector & AA-CIFAR100 (seen) & SVHN (unseen) \\ 
\midrule
VAN TR & 84.48 $\pm$ 0.59 & 75.32 $\pm$ 0.62  \\ 
RCE TR & 50.04 $\pm$ 0.07 & 55.44 $\pm$ 5.76 \\ 
LAP TR & \textbf{87.10} $\pm$ 0.12 & 72.98 $\pm$ 2.72  \\
\cdashlinelr{1-3}
VAN MH & 83.44 $\pm$ 0.48 & 78.82 $\pm$ 0.48 \\  
RCE MH & 50.04 $\pm$ 0.07 & 58.74 $\pm$ 2.36 \\
LAP MH & 86.04 $\pm$ 0.31 & \textbf{86.84} $\pm$ 0.68  \\
\bottomrule
\end{tabular}
\end{center}
\end{table}

\begin{table}[H] 
\footnotesize
\caption{Average OOD detection accuracy and standard deviation over 5 runs using ResNeXt50 trained on CIFAR100.}
\label{tab:ood-resnext50-accuracy-2}
\begin{center}
\begin{tabular}{cccc}
\toprule
Detector & CW-CIFAR100 (seen) & SVHN (unseen) \\ 
\midrule
VAN TR & 95.82 $\pm$ 0.67 & \textbf{92.92} $\pm$ 1.36   \\ 
RCE TR & 76.48 $\pm$ 0.75 & 75.66 $\pm$ 0.68 \\ 
LAP TR & \textbf{95.94} $\pm$ 0.57 & 85.94 $\pm$ 2.88 \\
\cdashlinelr{1-3}
VAN MH & 94.96 $\pm$ 0.81 & 85.10 $\pm$ 1.50 \\  
RCE MH & 76.20 $\pm$ 0.72 & 72.20 $\pm$ 1.47 \\
LAP MH & 94.82 $\pm$ 0.34 & 88.92 $\pm$ 1.26 \\
\bottomrule
\end{tabular}
\end{center}
\end{table}

\begin{table}[H] 
\footnotesize
\caption{Average OOD detection AUROC and standard deviation over 5 runs using ResNeXt50 trained on CIFAR100.}
\label{tab:ood-resnext50-auroc-1}
\begin{center}
\begin{tabular}{cccc}
\toprule
Detector & AA-CIFAR100 (seen) & SVHN (unseen) \\ 
\midrule
VAN TR & 94.96 $\pm$ 0.38 & 78.77 $\pm$ 1.15   \\ 
RCE TR & 50.12 $\pm$ 0.05 & 50.30 $\pm$ 8.13 \\ 
LAP TR & \textbf{96.45} $\pm$ 0.08 & 76.28 $\pm$ 0.73 \\
\cdashlinelr{1-3}
VAN MH & 93.32 $\pm$ 0.41 & 85.02 $\pm$ 0.97  \\  
RCE MH & 50.15 $\pm$ 0.08 & 58.04 $\pm$ 3.56  \\
LAP MH & 94.87 $\pm$ 0.24 & \textbf{92.76} $\pm$ 0.39  \\
\bottomrule
\end{tabular}
\end{center}
\end{table}

\begin{table}[H] 
\footnotesize
\caption{Average OOD detection AUROC and standard deviation over 5 runs using ResNeXt50 trained on CIFAR100.}
\label{tab:ood-resnext50-auroc-2}
\begin{center}
\begin{tabular}{cccc}
\toprule
Detector & CW-CIFAR100 (seen) & SVHN (unseen) \\ 
\midrule
VAN TR & 99.00 $\pm$ 0.13 & 95.17 $\pm$ 0.31   \\ 
RCE TR & 87.50 $\pm$ 1.65 & 79.99 $\pm$ 3.62 \\ 
LAP TR & \textbf{99.16} $\pm$ 0.28 & 94.84 $\pm$ 1.63 \\
\cdashlinelr{1-3}
VAN MH & 98.24 $\pm$ 0.32 & 93.07 $\pm$ 0.33 \\  
RCE MH & 86.73 $\pm$ 2.03 & 77.42 $\pm$ 3.38 \\
LAP MH & 97.84 $\pm$ 0.20 & \textbf{95.92} $\pm$ 0.64 \\
\bottomrule
\end{tabular}
\end{center}
\end{table}

\begin{table}[H] 
\footnotesize
\caption{Average OOD detection FPR at 95$\%$ TPR and standard deviation over 5 runs using ResNeXt50 trained on CIFAR100.}
\label{tab:ood-resnext50-fpr95tpr-1}
\begin{center}
\begin{tabular}{cccc}
\toprule
Detector & AA-CIFAR100 (seen) & SVHN (unseen) \\ 
\midrule
VAN TR & 27.68 $\pm$ 1.84 & 32.66 $\pm$ 1.41   \\ 
RCE TR & 96.12 $\pm$ 0.96 & 94.72 $\pm$ 3.10  \\ 
LAP TR & \textbf{21.62} $\pm$ 0.30 & \textbf{29.47} $\pm$ 0.59  \\
\cdashlinelr{1-3}
VAN MH & 29.75 $\pm$ 1.09 & 39.06 $\pm$ 0.96  \\
RCE MH & 95.28 $\pm$ 0.27 & 91.42 $\pm$ 1.26  \\
LAP MH & 24.74 $\pm$ 0.69 & 34.54 $\pm$ 1.16  \\
\bottomrule
\end{tabular}
\end{center}
\end{table}

\begin{table}[H] 
\footnotesize
\caption{Average OOD detection FPR at 95$\%$ TPR and standard deviation over 5 runs using ResNeXt50 trained on CIFAR100.}
\label{tab:ood-resnext50-fpr95tpr-2}
\begin{center}
\begin{tabular}{cccc}
\toprule
Detector & CW-CIFAR100 (seen) & SVHN (unseen) \\ 
\midrule
VAN TR & 5.42 $\pm$ 0.82 & \textbf{8.90} $\pm$ 1.20   \\ 
RCE TR & 44.12 $\pm$ 3.31 & 45.68 $\pm$ 2.01 \\ 
LAP TR &  \textbf{4.90} $\pm$ 0.92 & 10.52 $\pm$ 3.61 \\
\cdashlinelr{1-3}
VAN MH & 8.02 $\pm$ 0.81 & 16.38 $\pm$ 0.82 \\
RCE MH & 45.12 $\pm$ 3.76 & 48.45 $\pm$ 4.55 \\
LAP MH & 8.10 $\pm$ 0.46 & 9.80 $\pm$ 1.21 \\
\bottomrule
\end{tabular}
\end{center}
\end{table}

\subsection{Attacking the Detector}\label{appsubsec:exp-adap}

We consider here the case where the attacker also attacks the detector (adaptive attacks). We try two such attacks on the TR and MH detectors on ResNet110 trained on CIFAR10. Both attacks are white-box with respect to the network. The first is black-box with respect to the detector. It only knows if an adversarial sample has been detected or not. The second has some knowledge about the detector. It knows what features it uses and can attack it directly to find adversarial features. We test these attacks by looking at the percentage of detected successful adversarial samples that they turn into undetected successful adversarial samples that fool both the network and the detector.

The first attack proceeds as follows. A strong white-box attack (CW) is used on the network on image $x$ that has label $y$. If it finds a successful adversarial image $\Tilde{x}$ that fools the network into predicting $\Tilde{y} \neq y$ but is detected by the detector, the attacker will attempt to modify this image $\Tilde{x}$ so that the network and the detector are both fooled. For this, the image $\Tilde{x}$ is used as the initialization for an attack (HSJ with a budget of 50 iterations and 10000 evaluations) on a black-box Network-Detector system. The attacker considers that the Network-Detector behaves as follows: it outputs the class prediction of the network if the detector does not detect an attack and outputs an additional `detected' class if the detector detects an attack. The attacker attacks this Network-Detector on image $\Tilde{x}$ targeting the $\Tilde{y}$ label. This way the network makes a mistake and the `detected' class is avoided. On the vanilla ResNet110, this attack turns $16.5\%$ of 1700 detected successful adversarial samples $\Tilde{x}$ into undetected successful adversarial samples on our detector, compared to $25.7\%$ on the Mahalanobis detector. These percentages are lower on the LAP-ResNet110 as they drop to $6.8\%$ on our detector and $12.9\%$ on the Mahalanobis detector. This shows that LAP training improves the robustness of both adversarial detectors to being attacked themselves, and that the Transport detector is more robust than the MH detector.

The second attack is very similar to the adaptive attack used in \cite{cw1} to break the Kernel Density detector of \cite{kernel-density}. It proceeds as follows. A strong white-box attack (CW) is used on the network on image $x$ that has label $y$. If it finds a successful adversarial image $\Tilde{x}$ that fools the network but is detected by the detector, the detection features $\Tilde{z}$ that $\Tilde{x}$ generates when run through the network are used as the initialization for a black-box attack (HSJ with a budget of 50 iterations and 10000 evaluations) on the detector. If successful adversarial detection features $z^*$ that fool the detector are found, the attacker has to find an adversarial perturbation of $x$ that still fools the network and that generates these features $z^*$ (or close features that also fool the detector) when run through the network. We do this as in \cite{cw1} by solving the following optimization problem:
\begin{equation}
\label{eq:adap}
\min_{x^*} - L(N(x^*), y) + c_1 \| D(x^*) - z^* \| + c_2 \| x^* - x \|
\end{equation}
where $L$ is the cross-entropy loss, $N$ is the network, and $D$ is the (differentiable) function that returns the detection features of its input. This optimization problem is differentiable and we try differentiable optimization algorithms such as BFGS and NR to solve it. The initial detected successful adversarial image $\Tilde{x}$ is used as initialization as in \cite{cw1}. This attack turns $14\%$ of detected successful adversarial samples $\Tilde{x}$ into undetected successful adversarial samples on our detector on the LAP-ResNet110.

Given that initial detection rates of successful adversarial samples are almost $100\%$ (see Appendix \ref{appsubsec:exp-tpr}), this shows that adaptive attacks do not (at least not easily) circumvent the detector, as detection rates drop to $85\%$ at worst. Obviously, the second attack is stronger than the first one, but it can probably still be improved by using a white-box attack that is specific to random forests for attacking the detector such as \cite{rf1} or \cite{rf2}, or a different loss than cross-entropy such as the one used in the CW attack. However, the difficulty of combining the attack on the network with that on the detector remains. It is the non-differentiability of the random forest that forces either this separate treatment of network and detector then the use of a proxy differentiable term for the detector (here $\| D(x^*) - z^* \|$ in (\ref{eq:adap})) to combine both, or the use of a black-box method as in the first attack. Also, we did not consider here the ensemble of the class-conditional detector and the general detector, which is the best performing version of the detector (see Section \ref{subsec:exp-seen}), and should be even more robust to adaptive attacks, as the attacker will have to fool two random forest detectors at once and target a particular label, constraining further the optimization problem he solves.

\end{document}